\documentclass{article}
\usepackage{iclr2024_conference,times}

\usepackage{amsmath,amsfonts,bm}

\def\eqref#1{equation~\ref{#1}}

\def\1{\bm{1}}

\DeclareMathAlphabet{\mathsfit}{\encodingdefault}{\sfdefault}{m}{sl}
\SetMathAlphabet{\mathsfit}{bold}{\encodingdefault}{\sfdefault}{bx}{n}

\usepackage[utf8]{inputenc} %
\usepackage[T1]{fontenc}    %
\usepackage[pagebackref=true,breaklinks=true,colorlinks,linkcolor=blue,citecolor=blue,bookmarks=false]{hyperref}
\usepackage{url}            %
\usepackage{booktabs}       %
\usepackage{amsfonts}       %
\usepackage{amsmath}        %
\usepackage{amsthm}
\usepackage{nicefrac}       %
\usepackage{microtype}      %
\usepackage{xcolor}         %
\usepackage{pifont}         %
\usepackage{multirow}       %
\usepackage{colortbl}       %
\usepackage{graphicx}       %
\usepackage{wrapfig}        %
\usepackage{xspace}         %
\usepackage[noabbrev,capitalise]{cleveref} %
\usepackage{listings}
\usepackage{caption}
\usepackage{subcaption}

\newtheorem{lemma}{Lemma}
\newtheorem{proposition}{Proposition}

\DeclareMathOperator*{\sigmoidtwo}{sigmoid}
\newcommand{\xv}{\ensuremath{x_v}\xspace}
\newcommand{\xt}{\ensuremath{x_t}\xspace}
\newcommand{\mvt}{\ensuremath{m_{vt}}\xspace}
\newcommand{\Zv}{\ensuremath{\mathbf{Z}_v}\xspace}
\newcommand{\Zt}{\ensuremath{\mathbf{Z}_t}\xspace}
\newcommand{\Ztprime}{\ensuremath{\mathbf{Z}_{t^\prime}}\xspace}
\newcommand{\muv}{\ensuremath{\mu_v}\xspace}
\newcommand{\mut}{\ensuremath{\mu_t}\xspace}
\newcommand{\sv}{\ensuremath{\sigma_v}\xspace}
\newcommand{\st}{\ensuremath{\sigma_t}\xspace}
\newcommand{\dZZ}{\ensuremath{d (\Zv, \Zt)}\xspace}
\newcommand{\Lm}{\ensuremath{\mathcal L_\text{match}}\xspace}
\newcommand{\Lpm}{\ensuremath{\mathcal L_\text{pseudo-match}}\xspace}
\newcommand{\Lvib}{\ensuremath{\mathcal L_\text{VIB}}\xspace}

\newcommand{\ours}{PCME\texttt{++}\xspace}
\newcommand{\conda}{\textcolor{newred}{\textbf{(a)}}\xspace}
\newcommand{\condb}{\textcolor{newyellow}{\textbf{(b)}}\xspace}
\newcommand{\condc}{\textcolor{newgreen}{\textbf{(c)}}\xspace}
\newcommand{\Rk}{R@$k$\xspace}

\definecolor{Gray}{gray}{0.85}
\newcolumntype{g}{>{\columncolor{Gray}}c}

\definecolor{newred}{HTML}{e31929}
\definecolor{newgreen}{HTML}{018958}
\definecolor{newyellow}{HTML}{f9af2f}

\definecolor{darkergreen}{RGB}{21, 152, 56}
\definecolor{red2}{RGB}{252, 54, 65}
\newcommand{\yesmark}{\textcolor{darkergreen}{\ding{52}}}
\newcommand{\nomark}{\textcolor{red2}{\ding{56}}}

\definecolor{codegreen}{rgb}{0,0.6,0}
\definecolor{codegray}{rgb}{0.5,0.5,0.5}
\definecolor{codepurple}{rgb}{0.58,0,0.82}
\definecolor{backcolour}{rgb}{0.95,0.95,0.92}

\lstdefinestyle{mystyle}{
  backgroundcolor=\color{backcolour}, commentstyle=\color{codegreen},
  keywordstyle=\color{magenta},
  numberstyle=\tiny\color{codegray},
  stringstyle=\color{codepurple},
  basicstyle=\ttfamily\footnotesize,
  breakatwhitespace=false,         
  breaklines=true,                 
  captionpos=b,                    
  keepspaces=true,                 
  numbers=left,                    
  numbersep=5pt,                  
  showspaces=false,                
  showstringspaces=false,
  showtabs=false,                  
  tabsize=2
}

\lstset{style=mystyle}

\newcommand{\eg}{\textit{e.g.}}
\newcommand{\ie}{\textit{i.e.}}

\newcommand{\Ie}{\textit{I.e.}}

\iclrfinalcopy
\title{Improved Probabilistic Image-Text\\Representations}

\author{Sanghyuk Chun\\NAVER AI Lab}

\begin{document}

\maketitle

\begin{abstract}
Image-Text Matching (ITM) task, a fundamental vision-language (VL) task, suffers from the inherent ambiguity arising from multiplicity and imperfect annotations. Deterministic functions are not sufficiently powerful to capture ambiguity, prompting the exploration of probabilistic embeddings to tackle the challenge. However, the existing probabilistic ITM approach encounters two key shortcomings; the burden of heavy computations due to the Monte Carlo approximation, and the loss saturation issue in the face of abundant false negatives. To overcome the issues, this paper presents an improved Probabilistic Cross-Modal Embeddings (named PCME\texttt{++}) by introducing a new probabilistic distance with a closed-form solution. In addition, two optimization techniques are proposed to enhance PCME\texttt{++} further: first, the incorporation of pseudo-positives to prevent the negative effect under massive false negatives; second, mixed sample data augmentation for probabilistic matching. Experimental results on MS-COCO Caption and two extended benchmarks, CxC and ECCV Caption, demonstrate the effectiveness of PCME\texttt{++} compared to state-of-the-art ITM methods. The robustness of PCME\texttt{++} is also evaluated under noisy image-text correspondences. In addition, the potential applicability of PCME\texttt{++} in automatic prompt-filtering for zero-shot classification is shown. The code is available at \url{https://github.com/naver-ai/pcmepp}.
\end{abstract}

\section{Introduction}
\label{sec:intro}

Given images and captions, Image-Text Matching (ITM) is the task of retrieving the most relevant images/captions for the given query caption/image \citep{frome2013devise, young2014image, kiros2014unifying, faghri2018vsepp, gu2018look, lee2018scan, huang2018learning, Li2019VSRN, song2019pvse, wehrmann2019language, wu2019unified, Wang2020CVSE, Diao2021SGRAF, chun2021pcme, chen2021vseinfty, huang2021ncr, biten2022image, kim2023improving, radford2021clip}. The applications of ITM include cross-modal retrieval \citep{faghri2018vsepp} from paired image-caption datasets, such as MS-COCO Caption \citep{chen2015cococaption}, and zero-shot classification \citep{radford2021clip}, by treating class labels as a text (\eg, ``a photo of $\{\,\cdot\,\}$''). Owing to its significant role in image understanding and language comprehension, ITM has emerged as a fundamental Vision Language (VL) downstream task. However, this problem inherently suffers from the ambiguity caused by \textit{many-to-many correspondences} and \textit{sparse annotations} of the ITM datasets.

The nature of image-text matching is \textit{many-to-many}; an image can be described in numerous text explanations, and there are a plentiful number of visual scenes to visualize a text description. However, simultaneously, our datasets are \textit{sparsely annotated}. The existing ITM datasets are built by collecting paired image-caption and treating the collected image-caption pairs as the only positives without considering other potential positives in ``negative'' pairs \citep{chen2015cococaption, plummer2015flickr30k, sharma2018conceptual, changpinyo2021conceptual, desai2021redcaps}. For example, \citet{chun2022eccv_caption} showed that the MS-COCO Caption dataset has massive missing positives; 88.2\% of caption-to-image positives and 72.1\% of image-to-caption positives are labeled as ``negative'', \ie, false negatives, (FNs). \Cref{fig:intro} shows an example. While humans judge all images and texts are plausibly matched, the dataset only treats a pair $(\xv^i, \xt^j)$ as positive when $i=j$. This paper argues that the inherent multiplicity and abudant FNs lead to the ambiguity of ITM datasets
(\S\ref{subsec:problem_def}).
\begin{figure}
    \centering
    \includegraphics[width=\linewidth]{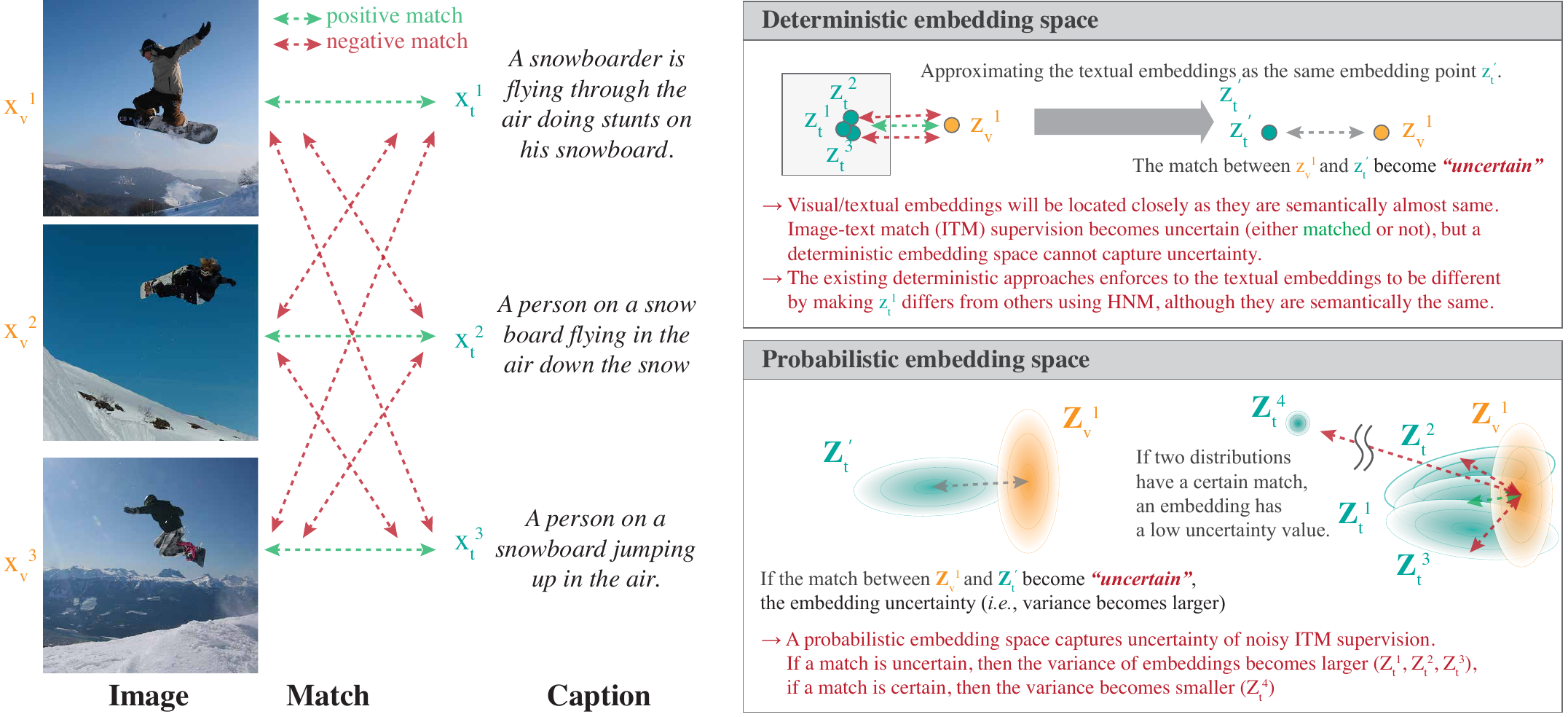}
    \vspace{-1em}
    \caption{\small {\bf Inherent ambiguity of ITM.}
    We assume that the deterministic textual embeddings are mapped to the same point $z_t^\prime$, \ie, $z_t^1 \approx z_t^2 \approx z_t^3 \approx z_t^\prime$, as well as the probabilistic textual embeddings $\Zt^1 \approx \ldots \approx \Zt^\prime$.}
    \label{fig:intro}
\end{figure}

This paper aims to design a proper joint embedding space that represents the inherent ambiguity by probabilistic embeddings \citep{oh2019hibprobemb}, \ie, encoding an input to a random variable rather than a deterministic vector. Probabilistic embeddings have been introduced for many applications with inherent ambiguity, such as
face understanding \citep{shi2019probabilistic, chang2020datauncertainty}, 2D-to-3D pose estimation \citep{sun2020viewinvprobemb}, speaker diarization \citep{silnova2020probabilistic}, video understanding \citep{park2022probabilistic}, and composed image retrieval \citep{neculai2022probabilistic}. Especially, \citet{chun2021pcme} investigated the primitive probabilistic approach for ITM, PCME.
Although PCME shows reasonable retrieval performances and interesting observations through uncertainty measures, PCME suffers from expensive computations due to Monte Carlo approximation and fast loss saturation under FNs.

Firstly, PCME needs expensive sampling operations for both training and inference. For example, if we randomly draw 7 samples for each input, computing the distance between two samples costs $O(7 \times 7)$. Furthermore, due to the sampling operation, PCME retrieval operation cannot be extended to large-scale efficient retrieval systems, such as FAISS \citep{johnson2019faiss}. This issue is solved by introducing a new closed-form sampled distance (CSD), and a new objective function based on the distance (\S\ref{subsec:prob_dist}). In addition, as the CSD consists of Euclidean distance and the relationship between variance embeddings, we can easily adapt approximated KNN (ANN) to \ours (\S\ref{subsec:more_applications}). Experimental results show that the closed-form distance not only makes the operation efficient but also convergences to a better solution by computing an exact solution instead of an approximation.

In addition, this paper proposes two optimization techniques to improve \ours under abundant false negatives (FNs) by introducing two soft label strategies that allow less penalty to potential FN pairs in the dataset: assigning pseudo-positives (PP) for high-confident samples (\S\ref{subsec:pp}) and mixed sample data augmentation (MSDA) for probabilistic matching (\S\ref{subsec:msda}). Note that applying soft labels to \ours is straightforward because \ours is based on a pair-wise loss function.

\ours is evaluated on MS-COCO Caption \citep{chen2015cococaption} and its extended benchmarks CxC \citep{parekh2020cxc} and ECCV Caption \citep{chun2022eccv_caption} with state-of-the-art ITM methods (\S\ref{subsec:coco_exp}). \ours consistently works better than the comparison methods on the COCO benchmark, especially when the backbone size grows. \ours is also evaluated on the noisy correspondence benchmark \citep{huang2021ncr}, indicating that our method is not only effective for the original task but also holds the potential to address the noisy correspondence problem. Furthermore, this paper shows that the textual uncertainty of \ours can be applied to a prompt-filtering for a zero-shot classification with a pre-trained model on large-scale VL datasets, demonstrating the versatility and scalability of our method for a wide range of applications (\S\ref{subsec:more_applications}). Finally, the qualitative advantages of the learned uncertainty of \ours by capturing dataset uncertainty are shown in \S\ref{subsec:uncertainty_analysis}.

\paragraph{Contributions.}
This paper shows that FNs lead to inherent ambiguity for VL datasets (\cref{subsec:problem_def}) and proposes to solve the problem by \ours. The newly introduced probability distance, named CSD, is more suitable to probabilistic representations as shown in \cref{subsec:prob_dist}.
This paper also proposes two soft label optimization techniques: PPs and MSDA in \cref{subsec:pp}. As \ours uses a pair-wise loss function invariant to other samples in mini-batch, it is straightforward to apply the soft labels. The extensive studies on COCO Caption show the effectiveness of \ours. Impressively, while we can take advantage of probabilistic representations, such as interpretability (\cref{fig:simga_vs_perf}, \cref{fig:vis_coco}, \ref{fig:uncertainty_visualize}), \ours performed the best in all backbones, especially when scaling-up backbones or under strong noisy correspondence.
Finally, the primitive results of applying \ours to uncertainty-based prompt-filtering for zero-shot classification are demonstrated.

\section{Improved Probabilistic Cross-Modal Embeddings (\ours)}

\subsection{Problem definition: Ambiguity of ITM datasets}
\label{subsec:problem_def}
Let \xv and \xt be the input image and caption, respectively. For each image text pair, a binary matching indicator \mvt $\in \{0, 1\}$ denotes whether \xt describes \xv well. This paper argues that the inherent multiplicity and the sparse annotations make \mvt ambiguous. For example, as shown in \cref{fig:intro}, $\xt^1$ ({\small \textit{``A person on a snowboard flying in the air down the snow''}}) and $\xt^2$ ({\small \textit{``A person on a snowboard jumping up in the air.''}}) are semantically almost the same, hence we may assume that $\xt^1$ and $\xt^2$ are mapped to almost the same embedding point $z_t^\prime$, \ie, $f(\xt^1) \approx f(\xt^2) = z_t^\prime$ if we have a proper mapping $f(\cdot)$ between the input space and the embedding space. In this case, if $\xt^1$ and $\xv^1$ are a positive match, $\xt^2$ and $\xv^1$ should be a positive match in the embedding space. However, because our dataset contains only sparse matching relationships \citep{chun2022eccv_caption, parekh2020cxc}, $\xt^2$ and $\xv^1$ are a negative match. In other words, in the embedding space, the matching between $z_v^1$ and $z_t^\prime \,(\approx f(x_t^1) \approx f(x_t^2))$ becomes ambiguous (\ie, it can be either positive or negative). As shown in \cref{fig:intro}, a deterministic embedding space cannot capture the inherent uncertainty originated by the multiplicity and the sparse annotations.
The existing deterministic approaches, therefore, rely on the Hardest Negative Mining (HNM) strategy \citep{faghri2018vsepp}, selecting the closest pair as the only negative for computing a triplet loss. The HNM strategy enforces sparse positive pairs to be closer than other false negative (FN) pairs, resulting in a twisted embedding space that cannot capture the inherent uncertainty of VL datasets. We empirically show that the HNM strategy eventually converges to a suboptimal embedding space when the ambiguity intensifies, \ie, under strong noisy correspondences (\S\ref{subsec:coco_exp}). In contrast, probabilistic embeddings can naturally mitigate the issue by capturing the ambiguity of \mvt with a probability distribution \citep{kirchhof2023probabilistic}.

\subsection{Probabilistic contrastive learning}
\label{subsec:prob_dist}
We first define a visual embedding and a text embedding of the given image \xv and \xt as normally distributed random variables, $\Zv \sim \mathcal N (\muv, \Sigma_v)$ and $\Zt \sim \mathcal N (\mut, \Sigma_t)$, respectively. For simplicity, we assume diagonal covariance matrices and simplify the notations as $\mathcal N (\muv, \sv^2)$ and $\mathcal N (\mut, \st^2)$, where $\mu$ and $\sigma$ are $D$-dimensional vectors. As shown in \cref{fig:intro}, our purpose is to learn probabilistic embeddings \Zv and \Zt satisfying the following properties: \conda there exists a proper probabilistic distance between \Zv and \Zt. \condb if the match \mvt is certain, then \Zv and \Zt have small variances. \condc if the match between \xv and \xt (\mvt) is ambiguous, then \Zv and \Zt have large variances.

We define closed-form sampled distance \textbf{(CSD)}, between two probabilistic embeddings \Zv and \Zt:
\begin{equation}
\label{eq:prob_distance}
    d (\Zv, \Zt) = \mathbb E_{\Zv, \Zt} \| \Zv - \Zt \|_2^2 = \| \mu_v - \mu_t \|_2^2 + \| \sigma_v^2 + \sigma_t^2 \|_1,
\end{equation}
where $\|\cdot\|_p$ is a p-norm operation. To be self-contained, the full derivation of \cref{eq:prob_distance} is provided in \cref{subsec:appendix_derivation}. \Cref{eq:prob_distance} satisfies most of the properties of a metric function (\ie, positivity, symmetry, and triangular inequality) except zero self-distance; $d(\mathbf{Z}, \mathbf{Z})$ is $2\| \sigma^2 \|_1$, not zero. \Ie, \cref{eq:prob_distance} satisfies the condition \conda. There are two ways to make \Zv and \Zt closer/further; making \muv and \mut closer/further, or making \sv and \st smaller/larger. Hence, if we assume fixed \muv and \mut, we have to decrease \sv and \st to minimize \dZZ; if \Zv and \Zt are a certain positive match (\ie, $\mvt = 1$), then \sv and \st will be collapsed to zero (\ie, satisfying the condition \condb), and \dZZ will become Euclidean distance. On the other hand, if the match between \Zv and \Zt is ambiguous (\ie, \mvt can be either positive or negative), then \sv and \st will not be collapsed to zero, for increasing \dZZ for the negative match case; \dZZ also satisfies the condition \condc.

CSD has a similar form to Wasserstein 2-distance (WD), $\inf_{\Zv, \Zt} \mathbb E_{\Zv, \Zt} \| \Zv - \Zt \|_2^2 = \| \mu_v - \mu_t \|_2^2 + \| \sigma_v - \sigma_t \|_2^2$, where WD includes the infimum operation. However, WD is not a proper probabilistic distance in the matching problem, especially WD cannot satisfy the condition \condb. Assume the scenario when $\mu$ values are fixed again. In this case, \sv and \st have no motivation to be decreased, but they are just enforced to have the same values. Hence, the learned $\sigma$ by WD cannot represent the sample certainty. \cref{fig:toy} shows a 2-D toy scenario where CSD satisfies the proper uncertainty conditions while WD cannot. In the figure, \textcolor{newred}{red}, \textcolor{newyellow}{yellow}, and \textcolor{newgreen}{green} dots are certain samples, and others are uncertain samples. The size of each dot denotes the intensity of the learned $\sigma$ values. Here, we observe that $\frac{\bar \sigma^2_\text{uncertain}}{\bar \sigma^2_\text{certain}}$, the average $\sigma^2$ value for uncertain/certain samples by CSD are 1.82, while we have 1.04 for WD. More details of the toy experiment are described in \cref{subsec:appendix_toy_exp}.

CSD is also related to the matching probability \citep{oh2019hibprobemb} used by PCME \citep{chun2021pcme}, 
where the matching probability cannot be computed in a closed-form but should be computed by an expensive Monte-Carlo approximation. Empirically, \ours is 33\% faster than PCME for this reason. 
Furthermore, the PCME loss gives more weight to samples that correctly predict the distance relationships. However, our dataset has abundant FNs (\eg, COCO captions have $\times$8.47 positive images than the ``ground-truth'' positive images \citep{chun2022eccv_caption}), which leads to wrong distance relationship supervision. Hence, PCME can suffer from the gradient saturation problem under abundant FNs.
The comparison between CSD and matching probability is discussed in \cref{subsec:appendix_prob_distance_comparisons}.

Now, based on \cref{eq:prob_distance}, the probabilistic matching objective function is defined as NLL loss:
\begin{equation}
\label{eq:match_loss}
    \Lm = -m_{vt} \log \sigmoidtwo( -a \cdot d(\Zv, \Zt) + b ) - (1 - m_{vt}) \log \sigmoidtwo( a \cdot d(\Zv, \Zt) - b ),
\end{equation}
where $m_{vt} \in \{0, 1\}$ is the matching indicator between $v$ and $t$. $a$ and $b$ are learnable scalar values, following \citet{chun2021pcme}. In practice, \cref{eq:match_loss} can be easily implemented by binary cross entropy (BCE) loss. We compute \Lm for all pairs in the mini-batch as contrastive learning objectives, such as InfoNCE \citep{radford2021clip}. The overview of the comparisons between our objective function, a standard triplet loss, and batch-wise contrastive loss are shown in \cref{fig:loss_comparison}.

To prevent the collapse of $\sigma$ (\ie, $\sigma \to 0$), \ours employs Variational Information Bottleneck (VIB) loss \citep{alemi2017vib}, \Lvib, following \citet{chun2021pcme}. As derived by \citet{oh2019hibprobemb}, \Lvib can be computed by the KL divergence between the learned distribution and $\mathcal N(0, I)$.

\subsection{Pseudo-positives (PP) for handling numerous false negatives}
\label{subsec:pp}

In practice, we use a small mini-batch (\eg, 128), which does not guarantee that all confusing samples are observed for each iteration.
To tackle the issue, \ours employs a simple pseudo-positive (PP) strategy: for a positive match $(v, t)$, $t^\prime$ is a PP match with $t$ if $d(\Zv, \Ztprime) \leq d(\Zv, \Zt)$. Using the PPs, we compute the PP matching loss \Lpm using \eqref{eq:match_loss}. The objective function becomes:
\begin{equation}
\label{eq:final_loss}
    \Lm + \alpha \Lpm + \beta \Lvib,
\end{equation}
where $\alpha$ and $\beta$ are control parameters of PP matching loss and VIB loss. In the experiments, $\alpha = 0.1$ and $\beta = 0.0001$ are chosen (\cref{subsec:appendix_more_ablation}). Pseudo-code for \cref{eq:final_loss} is shown in \cref{subsec:appendix_pseudo_code}.

\subsection{Mixed Sample Data Augmentation (MSDA) for probabilistic matching}
\label{subsec:msda}

MSDA, such as Mixup \citep{zhang2017mixup} or CutMix \citep{yun2019cutmix}, shows not only great improvements in empirical performances but also shows good theoretical properties, such as generalization \citep{zhang2020does,park2022msda} or calibration \citep{zhang2021mixupcalibration}. MSDA consists of two parts; input mixing (\ie, a generative process to generate a new mixed sample) and label mixing (\ie, modifying the supervision of the mixed sample). The intensity of the augmentation is controlled by $\lambda$, usually sampled from a pre-defined Beta distribution.
Usually, it is not straightforward to apply MSDA to metric learning or contrastive learning because their losses are computed in a batch-dependent way (\cref{fig:loss_comparison} (a) and (b)).
On the other hand, as our objective function is computed in a pair-wise manner (\cref{fig:loss_comparison} (c)), it is easier to apply MSDA to our objective function.
More detailed discussion of why MSDA is not applicable to previous methods is in \cref{subsec:appendix_nontrivial_msda}.

There are two issues with designing MSDA for probabilistic matching. First, MSDA for the textual modality is not straightforward. Hence, \ours only mixes visual inputs using Mixup and CutMix.
Second, we cannot directly mix labels because our scenario has no class label. Instead, we let \mvt smooth in \cref{eq:match_loss}, \ie, $\mvt \in [0, 1]$. This approach controls the smooth label by mixing intensity $\lambda$ by setting $m_{vt} = \lambda$.

The overview of the optimization procedure with PPs and MSDA is illustrated in \cref{fig:loss_comparison} (d). 
In the experimental results, 25\% of mini-batch images are mixed by sampling the mixing intensity $\lambda$ from $\text{Beta}(2, 2)$. For every mini-batch, Mixup or CutMix is randomly chosen for the mixing strategy. The empirical study shows that this strategy is slightly better than the widely-used batch-wise mixing strategy, \ie, randomly mixing the whole mini-batch or using the original mini-batch (\cref{subsec:appendix_more_ablation}).

\begin{figure}[t]
    \centering
    \includegraphics[width=.95\linewidth]{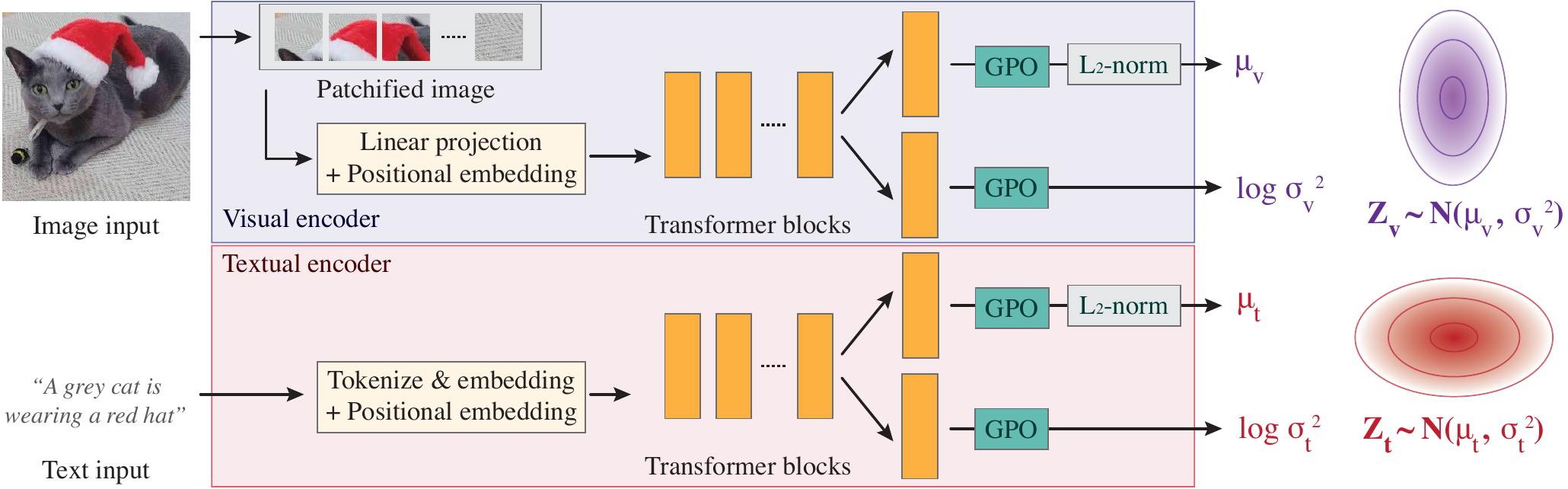}
    \vspace{-.5em}
    \caption{\small {\bf Architecture overview.} We use the same visual and textual backbones as CLIP. Each modality encoder encodes 
    $\ell_2$-normalized mean vector $\mu$ and the variance vector $\log \sigma^2$, followed by Generalized Pooling Operator (GPO) \citep{chen2021vseinfty}, to represent a normally distributed random variable $\mathbf{Z} \sim \mathcal{N} (\mu, \sigma^2)$.}
    \label{fig:arch}
\end{figure}
\subsection{Architecture}
\label{subsec:arch}
\ours trains visual and textual encoders separately, such as 
visual semantic embeddings or CLIP. 
Each encoder has two heads, $\mu$ and $\log \sigma^2$ heads whose output vectors are $D$-dimensional. An input is mapped to a normal distribution parameterized by the output of $\mu$ and $\log \sigma^2$ heads.

\ours employs a Vision Transformer (ViT) \citep{vit} as the visual backbone and a 12-layer 512-wide Transformer \citep{transformer} as the textual backbone, following CLIP. \ours duplicates the last transformer layer for $\mu$ and $\log \sigma^2$ heads, \eg, a textual backbone has a shared feature extractor with a 11-layer Transformer and $\mu$ and $\log \sigma^2$ are 1-layer Transformer blocks. The $\log \sigma^2$ head is randomly initialized, while the $\mu$ head is initialized as the same as the backbone initialization (\eg, from a pre-trained model). We empirically observe that using more layers for $\log \sigma^2$ marginally improves the performances, but we set the number of layers for $\log \sigma^2$ head to 1 for computational efficiency. Finally, we employ Generalized Pooling Operator (GPO) for the feature aggregation with the same parameter setting of \citet{chen2021vseinfty}. We observe that GPO brings both training stability and performance improvements. The model overview is illustrated in \cref{fig:arch}.

\section{Experiments}
\label{sec:exp}

\subsection{Experimental protocol}
\label{subsec:experimental_protocol}

Three evaluation benchmark datasets are used: COCO Caption \citep{chen2015cococaption}, and its two extended benchmarks, ECCV Caption (EC) \citep{chun2022eccv_caption} and CxC \citep{parekh2020cxc}. Note that they have the same images and captions (\xv, \xt) but with different match annotations \mvt. The overview of each benchmark can be found in \cref{subsec:appendix_benchmark_details}. Following \citet{chun2022eccv_caption}, \Rk for all benchmarks and mAP@R and R-Precision (R-P) for EC are reported. The tables also show ``RSUM'', the summation of R@1, R@5, and R@10 for each modality retrieval on COCO 1K. The full results for each modality and \Rk results are in \cref{subsec:appendix_full_experiments}. In the main paper, the averaged scores on each modality are reported. As we focus on the mitigation of the FN problem, we mainly focus on EC mAP@R, EC R-P, and COCO RSUM. Note that COCO and CxC have abundant FNs, hence their R@1 metrics could mislead to a wrong result \citep{musgrave2020metric, chun2022eccv_caption}.
\paragraph{Comparison methods.}
VSE$\infty$ \citep{chen2021vseinfty} is based on a triplet loss and hardest negative mining (HNM) \citep{faghri2018vsepp}. InfoNCE is the CLIP \citep{radford2021clip} pre-training objective. PCME \citep{chun2021pcme} is a primitive probabilistic ITM model with sampling-based matching probability.
This paper also evaluates two recent methods to tackle false negatives (FNs) of ITM, DAA \citep{li2022daa}\footnote{Note that although DAA argues that it is an ``approximation of probabilistic embedding'', DAA is not a probabilistic method, but a deterministic regularization method based on text similarity.} and P2RM \citep{wang2022p2rm}, where both of them are an additional regularization method on top of the HNM triplet loss, \ie, VSE$\infty$. The optimization hyperparameters for all experiments are fixed, such as learning rate, based on VSE$\infty$ ViT-B/32 validation rsum score. The hyperparameters of DAA and P2RM are searched in the same setting.
As we initialize all models by CLIP pre-trained models, CLIP zero-shot (ZS) is also reported as a baseline. All models have the same visual and textual backbones, except probabilistic models; they have an additional $\log \sigma^2$ head (See \cref{fig:arch}). All models are trained three times, and the average evaluation metric are reported.

\paragraph{Training details and model selection.}
\ours is initialized with the pre-trained CLIP model \citep{radford2021clip}, while newly introduced modules, such as $\log \sigma^2$ head and GPO are randomly initialized. All models are trained for 25 epochs using AdamP optimizer \citep{heo2021adamp}. The training details can be found in \cref{subsec:appendix_hyperparameters}.
The models are selected by the best validation RSUM following previous works.
For the case when the model selection criterion is not possible, this paper also shows the SWA \citep{izmailov2018swa} results for ViT-B backbones (See \cref{subsec:appendix_swa}).

\subsection{COCO ITM results}
\label{subsec:coco_exp}
\paragraph{Main results.}
\cref{tab:main} shows the main comparison results of \ours and other ITM methods. For a fair comparison, \ours (SWA) is not compared to the other methods. We first observe that \ours consistently outperforms other methods in most evaluation metrics on different backbones, particularly on EC mAP@R, R-P, and RSUM. Second, we observe that the scale-up of \ours leads to consistent performance increases without hyperparameter tuning, while deterministic counterparts (\eg, VSE$\infty$ and InfoNCE) suffer from performance drops when scaling up from ViT-B/16 to ViT-L/14.
As the backbone becomes more complex and larger, I presume that the backbone complexity of deterministic methods is sufficiently large to capture the noisy FNs in the dataset. Moreover, VSE$\infty$ uses a triplet loss with the HNM, making the effect of FNs more significant \citep{chun2022eccv_caption}. Meanwhile, the probabilistic methods are designed to handle many-to-many relationships with uncertainty representations (\cref{subsec:problem_def}); hence, the effect of the FNs is saturated during training PCME and \ours. Note that applying PP and MSDA to triplet loss (\eg, VSE$\infty$, DAA, and P2RM) is non-trivial because the triplet loss is not designed for taking smooth labels. More detailed discussions can be found in \cref{subsec:appendix_nontrivial_msda}. Third, we observe that the previously proposed methods to mitigate the FN problem, such as DAA and P2RM underperform than its baseline (VSE$\infty$). It shows that the regularization-based deterministic methods cannot solve the FN problem due to the limitation of the deterministic embeddings. The full retrieval results for each modality and \Rk scores are reported in \cref{subsec:appendix_full_experiments}. 
\begin{table}[t]
\caption{\small {\bf COCO cross-modal retrieval performances.} Comparisons of ITM methods with various backbones in ECCV Caption, CxC and COCO Caption. ``Prob?'' denotes whether a method is a probabilistic method or not. Each number is the average between the image-to-text retrieval and text-to-image retrieval results, and is the average of three different runs. The full numbers and standard errors are in \cref{subsec:appendix_full_experiments}.
$^\dagger$ denotes the re-evaluated results by the official checkpoints, otherwise, numbers are produced by our trained models.}
\label{tab:main}
\vspace{-.5em}
\small
\centering
\setlength{\tabcolsep}{4pt}
\begin{tabular}{llcccccccc}
\toprule
            &       & & \multicolumn{3}{c}{ECCV Caption} & CxC & \multicolumn{3}{c}{COCO}\\
Backbone & Method & Prob? & mAP@R & R-P & R@1                & R@1 & 1K R@1 & 5K R@1 & RSUM \\ \midrule
\multirow{9}{*}{\shortstack{ViT-B/32\\(151M)}} & CLIP ZS$^\dagger$ & \nomark & 26.8 & 36.9 & 67.1 & 42.0 & 59.5 & 40.3 & 471.9 \\
& VSE$\infty$ & \nomark & \underline{40.0} & \underline{49.5} & \textbf{83.1} & \textbf{57.1} & \textbf{75.5} & \textbf{55.2} & \underline{536.5} \\
& P2RM & \nomark & 39.0 & 48.7 & 82.0 & 53.6 & 73.3 & 51.7 & 530.2 \\
& DAA & \nomark & 39.2 & 49.0 & 82.0 & 54.8 & 73.6 & 52.9 & 530.9 \\
& InfoNCE & \nomark & 39.0 & 48.7 & 81.7 & 54.9 & 74.0 & 53.0 & 532.6 \\
& PCME & \yesmark & 39.1 & 48.9 & 81.4 & 54.7 & 73.8 & 53.0 & 532.0 \\
& \ours ($\mu$ only) & \nomark & 39.5 & 49.1 & 82.7 & \underline{57.0} & 75.3 & 55.2 & 536.2 \\
& \ours & \yesmark & \textbf{40.1} & \textbf{49.7} & \textbf{83.1} & 56.8 & \underline{75.4} & \underline{55.1} & \textbf{537.0} \\
\cline{2-10} \noalign{\vspace{0.1em}}
& \ours (SWA) & \yesmark & {40.2} & {49.8} & {82.9} & {56.8} & {75.5} & {55.2} & {537.3} \\ \midrule
\multirow{9}{*}{\shortstack{ViT-B/16\\(150M)}} & CLIP ZS$^\dagger$ & \nomark & 29.3 & 39.0 & 71.1 & 44.3 & 62.0 & 42.7 & 481.0 \\
& VSE$\infty$ & \nomark & \underline{41.7} & \underline{50.6} & \underline{86.3} & 62.3 & 79.1 & 60.7 & 547.2 \\
& P2RM & \nomark & 39.7 & 49.5 & 80.7 & 54.2 & 73.8 & 52.5 & 532.7 \\
& DAA & \nomark & 20.7 & 30.6 & 50.2 & 25.4 & 43.7 & 23.4 & 410.2 \\
& InfoNCE & \nomark & 41.1 & 50.4 & 84.8 & 60.9 & 78.3 & 59.3 & 545.5 \\
& PCME & \yesmark & 41.0 & 50.3 & 84.3 & 59.9 & 77.8 & 58.2 & 544.2 \\
& \ours ($\mu$ only) & \nomark & 41.2 & 50.4 & 85.7 & \underline{62.5} & \textbf{79.3} & \underline{61.0} & \textbf{548.0} \\
& \ours & \yesmark & \textbf{42.1} & \textbf{51.2} & \textbf{86.5} & \textbf{62.6} & \textbf{79.3} & \textbf{61.1} & \textbf{548.0} \\
\cline{2-10} \noalign{\vspace{0.1em}}
&\ours (SWA) & \yesmark & {42.2} & {51.2} & {86.6} & {62.9} & {79.6} & {61.3} & {548.5} \\
\midrule
\multirow{5}{*}{\shortstack{ViT-L/14\\(428M)}} & CLIP ZS$^\dagger$ & \nomark &  28.0 & 37.8 & 72.2 & 48.1 & 64.8 & 46.4 & 491.6 \\
& VSE$\infty$ & \nomark & 20.2 & 31.5 & 46.2 & 24.3 & 44.5 & 22.7 & 424.3 \\
& InfoNCE  & \nomark    & 35.6 & 45.8 & 75.6 & 48.0 & 69.5 & 45.9 & 520.6 \\
& PCME     & \yesmark    & \underline{41.2} & \underline{50.3} & \underline{86.0} & \underline{63.4} & \underline{80.3} & \underline{61.9} & \underline{550.4} \\
& \ours   & \yesmark    & \textbf{42.1} & \textbf{50.8} & \textbf{88.8} & \textbf{65.9} & \textbf{81.8} & \textbf{64.3} & \textbf{554.7} \\
\bottomrule
\end{tabular}
\end{table}

\begin{table}[t]
\caption{\small {\bf COCO noisy correspondence.}
Noisy correspondence results using the ViT-B/32 backbone (except NCR) 
are shown. NCR scores are re-evaluated by the official weights. As DECL does not provide the official weights, we report the scores from the paper. Noise ratio 0\% is the same as \cref{tab:main}.}
\label{tab:noise_ratio}
\vspace{-.5em}
\small
\centering
\begin{tabular}{llccccccc}
\toprule
            &        & \multicolumn{3}{c}{ECCV Caption} & CxC & \multicolumn{3}{c}{COCO}\\
Noise ratio & Method & mAP@R & R-P & R@1                & R@1 & 1K R@1 & 5K R@1 & RSUM \\ \midrule
\multirow{6}{*}{20\%}  & VSE$\infty$ & 37.0 & 46.3 & \underline{79.7} & \textbf{53.6} & \textbf{72.0} & \textbf{51.8} & \underline{518.6} \\
 & DAA & 6.7 & 12.5 & 18.5 & 7.0 & 15.3 & 6.0 & 212.8 \\
 & InfoNCE & 35.9 & 46.3 & 76.1 & 47.8 & 68.2 & 45.8 & 514.6 \\
 & PCME & \underline{37.6} & \textbf{47.6} & {79.2} & 50.6 & 70.3 & 48.7 & 520.7 \\
 & \ours (ours) & \textbf{37.7} & \textbf{47.6} & \textbf{80.0} & \underline{52.2} & \underline{71.6} & \underline{50.4} & \textbf{524.6} \vspace{.1em} \\ \cline{2-9}
 & NCR$^\dagger$ & 35.9 & 46.0 & 78.0 & 50.6 & 70.1 & 48.8 & \underline{518.6} \\ 
 & DECL$^\dagger$ & - & - & - & - & 69.6 & 49.4 & 518.2 \vspace{-.3em}\\  \midrule
\multirow{6}{*}{50\%}  & VSE$\infty$ & 18.0 & 28.5 & 43.7 & 20.7 & 39.2 & 19.1 & 394.1 \\
 & DAA & 0.3 & 0.8 & 1.0 & 0.3 & 0.8 & 0.2 & 20.9 \\
 & InfoNCE & 33.6 & 44.1 & 73.0 & 43.5 & 64.0 & 41.4 & 499.5 \\
 & PCME & \underline{35.2} & \underline{45.5} & \underline{75.7} & 46.3 & {66.6} & 44.4 & {508.0} \\
& \ours (ours) & \textbf{35.7} & \textbf{45.8} & \textbf{76.3} & \textbf{47.4} & \underline{67.6} & \underline{45.5} & \underline{511.0} \vspace{.1em} \\ \cline{2-9}
 & NCR$^\dagger$ & 34.0 & 44.3 & {75.1} & {47.3} & {66.8} & \underline{45.5} & {508.5} \\
 & DECL$^\dagger$ & - & - & - & - & \textbf{68.3} & \textbf{46.8} & \textbf{513.5} \vspace{-.3em}\\
\bottomrule
\end{tabular}
\vspace{-.5em}
\end{table}

\paragraph{Noisy correspondence.}
\cref{tab:noise_ratio} shows the additional comparisons under noisy correspondence (NC), \ie, by assuming that the training annotations are noisy. Following \citet{huang2021ncr}, the image-text relationships are randomly shuffled with the probability of 20\% and 50\%. A specifically designed method for solving the NC problem, NCR \citep{huang2021ncr} and DECL \citep{Qin2022DECL}, are also compared with the comparison methods. Following \citet{huang2021ncr}, the model selection criterion is also based on the clean validation rsum as the clean dataset scenario. Here, $\alpha$ for PP loss is set to 0.01 and 0.0 for 20\% and 50\% noise ratio because PPs can be incorrectly estimated under strong NC \citep{dividemix}. However, in \cref{subsec:appendix_pp_nc_overfit}, we can observe that although using weaker PPs shows better by the best model selection by clean validation rsum, it eventually suffers from serve overfitting. This paper argues that the best model selection criterion based on the clean validation split should be reconsidered for future works to avoid a wrong conclusion.
There are two findings in \cref{tab:noise_ratio}. First, the hardest negative mining-based triplet loss (VSE$\infty$ and DAA) shows vulnerability on strong noisy correspondence, \eg, 50\%. Second, although the probabilistic methods, such as PCME and \ours, are not designed for tackling NC, they successfully handle the scenario.

\begin{table}[t]
\caption{\small {\bf Effect of optimization methods.} 
Ablation study on VIB, Pseudo-Positives (PP) and Mixed Sample Data Augmentation (MSDA) with a ViT-B/32 backbone are shown.
}
\label{tab:loss_abl}
\vspace{-.5em}
\small
\centering
\begin{tabular}{cccccccccc}
\toprule
               && & \multicolumn{3}{c}{ECCV Caption} & CxC & \multicolumn{3}{c}{COCO}\\
VIB & PP & MSDA & mAP@R & R-P & R@1                & R@1 & 1K R@1 & 5K R@1 & RSUM \\ \midrule
\nomark & \nomark & \nomark & 38.9 & 48.6 & 82.2 & 56.7 & 75.2 & 54.9 & 535.9 \\
\yesmark & \nomark & \nomark & 39.3 & 49.0 & 83.1 & 56.1 & 74.5 & 54.3 & 534.5 \\
\nomark & \yesmark & \nomark & 39.0 & 48.6 & 82.7 & \textbf{56.8} & 75.2 & 55.0 & 536.0 \\
\nomark & \nomark & \yesmark & 39.0 & 48.6 & 82.1 & 56.4 & 74.9 & 54.6 & 535.5 \\
\yesmark & \yesmark & \nomark & 39.6 & 49.2 & 82.6 & 56.3 & 74.8 & 54.5 & 534.8 \\ \midrule
\yesmark & \yesmark & \yesmark  & \textbf{40.1} & \textbf{49.7} & \textbf{83.1} & \textbf{56.8} & \textbf{75.4} & \textbf{55.1} & \textbf{537.0} \\
\bottomrule
\end{tabular}
\vspace{-.75em}
\end{table}
\begin{table}[t!]
\caption{\small {\bf Effect of probability distance on training objective.} Results on ViT-B/32 backbone with VIB loss.
}
\label{tab:probdist_abl}
\vspace{-.5em}
\small
\centering
\begin{tabular}{lcccccccccc}
\toprule
            & \multicolumn{3}{c}{ECCV Caption} & CxC & \multicolumn{3}{c}{COCO}\\
Probability distance & mAP@R & R-P & R@1                & R@1 & 1K R@1 & 5K R@1 & RSUM \\ \midrule
KL Divergence & 0.0 & 0.1 & 0.0 & 0.0 & 0.1 & 0.0 & 3.2 \\
JS Divergence & 0.0 & 0.1 & 0.1 & 0.0 & 0.1 & 0.0 & 3.2 \\
Wasserstein 2-distance & 1.9 & 4.2 & 6.7 & 3.8 & 9.0 & 3.5 & 121.1 \\
Expected Likelihood Kernel & 36.5 & 46.0 & \underline{82.0} & \textbf{56.3} & \underline{74.1} & \textbf{54.7} & 529.0 \\
Bhattacharyya distance & \textbf{39.3} & \underline{48.9} & 80.7 & 53.7 & 72.5 & 51.8 & 524.9 \\
Match probability by PCME & 39.1 & \underline{48.9} & 81.4 & 54.7 & 73.8 & 53.0 & \underline{532.0} \\
Proposed (\cref{eq:prob_distance}) & \textbf{39.3} & \textbf{49.0} & \textbf{83.1} & \underline{56.1} & \textbf{74.5} & \underline{54.3} & \textbf{534.5} \\
\bottomrule
\end{tabular}
\vspace{-.75em}
\end{table}

\subsection{Ablation study}
\label{subsec:abl}
\cref{tab:loss_abl} shows that all the proposed techniques effectively improve probabilistic ITM. More detailed hyperparameter studies for each optimization (VIB, PP, and MSDA) are in \cref{subsec:appendix_more_ablation}. \cref{tab:probdist_abl} shows the impact of the probability distance on training objective (\cref{eq:match_loss}) by replacing \dZZ. For a fair comparison, all newly proposed optimization techniques except VIB are omitted. As we already observed in \cref{fig:toy}, we confirm that Wasserstein distance is not a proper uncertainty estimate as a training objective. Also, the table shows that \ours outperforms PCME in all metrics.
I presume it is because 
the matching probability is an approximated value by Monte Carlo approximation; therefore, the distance value will have an approximation gap.

More parameter studies for \cref{tab:loss_abl} and architecture study can be found in \cref{subsec:appendix_more_ablation}

\vspace{-.8em}
\begin{wrapfigure}{r}{0.35\linewidth}
    \vspace{-4.2em}
    \centering
    \includegraphics[width=\linewidth]{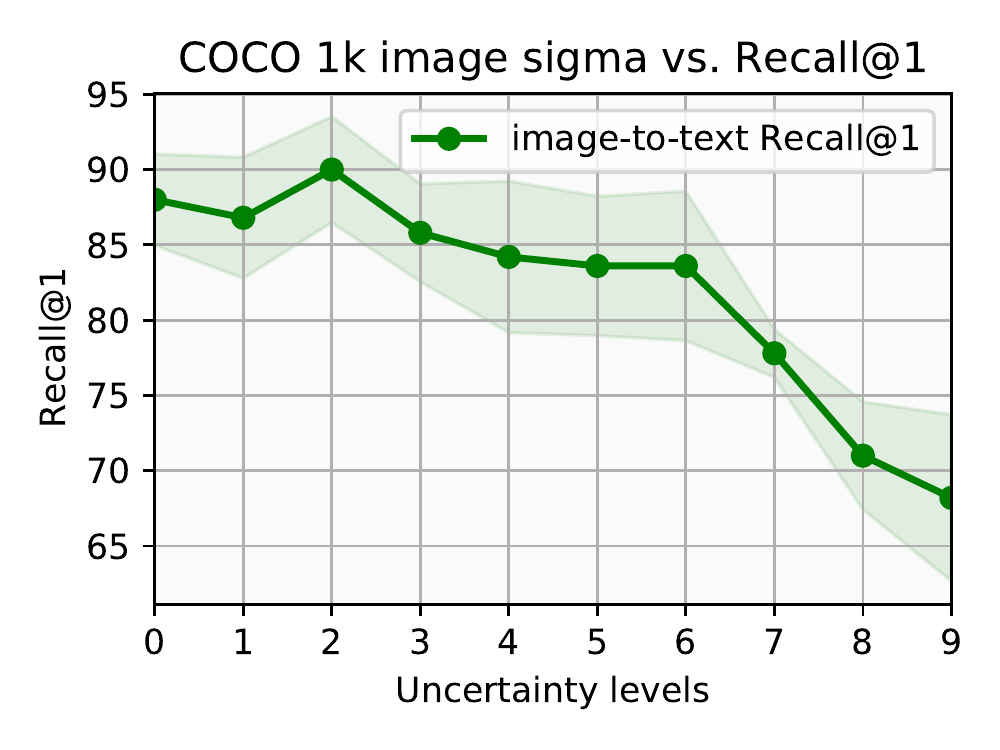}%
    \vspace{-.75em}
    \includegraphics[width=\linewidth]{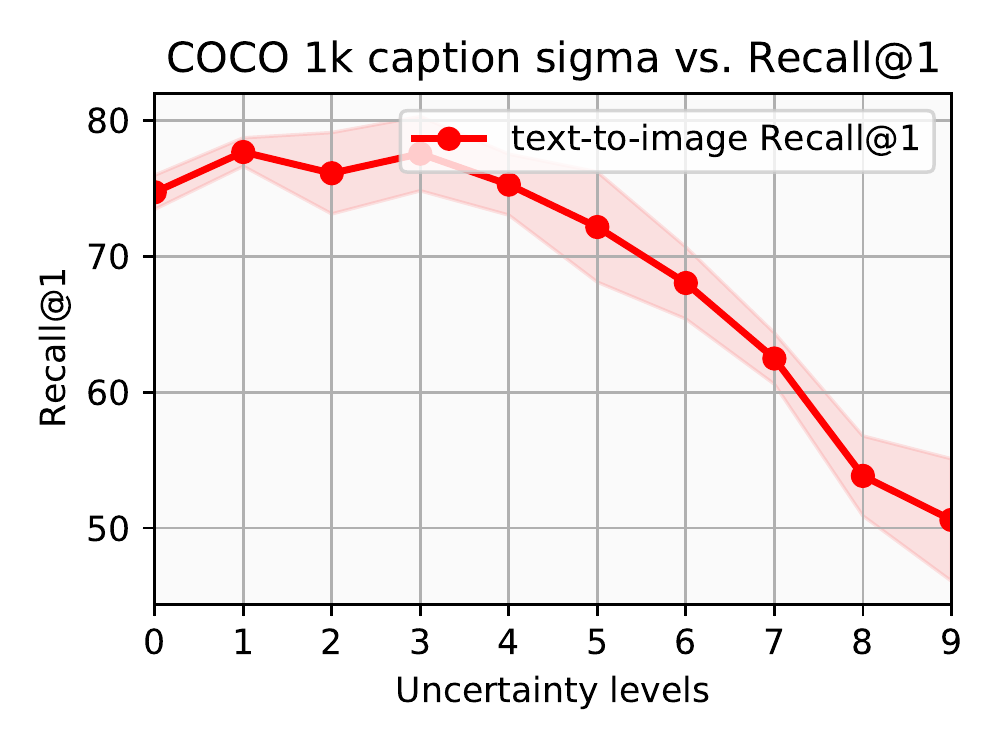}%
    \vspace{-1.25em}
    \caption{\small \textbf{$\|\sigma^2\|_1$ vs. R@1.}}
    \vspace{-2em}
    \label{fig:simga_vs_perf}
\end{wrapfigure}
\subsection{Uncertainty analysis}
\label{subsec:uncertainty_analysis}
From \cref{eq:prob_distance}, we can define the data uncertainty as $\| \sigma^2 \|_1$, \ie, the summation of the variance. Based on the data uncertainty, \cref{fig:simga_vs_perf} shows how the uncertainty captures the ambiguity of datasets. The average COCO 1K R@1s for each modality in each of the 10 uncertainty bins are reported in the figure. We observe that by the uncertainty increased, COCO R@1 (the same distribution as the training dataset) is decreased. The results support that the learned uncertainty by \ours can capture the inherent ambiguity of the matching annotations.

\cref{fig:uncertainty_visualize} shows examples of uncertain images and captions, and their retrieved items. The figure shows that data that can be matched with more samples have higher uncertainty values. As shown in the figure, the retrieved items for uncertain inputs are highly plausible even though the retrieved items are not in the COCO ground truth.
\cref{subsec:more_applications} and \ref{sec:limitations} discuss more benefits of uncertainty-aware learning and the learned uncertainty.

\begin{figure}[t]
    \centering
    \includegraphics[width=\linewidth]{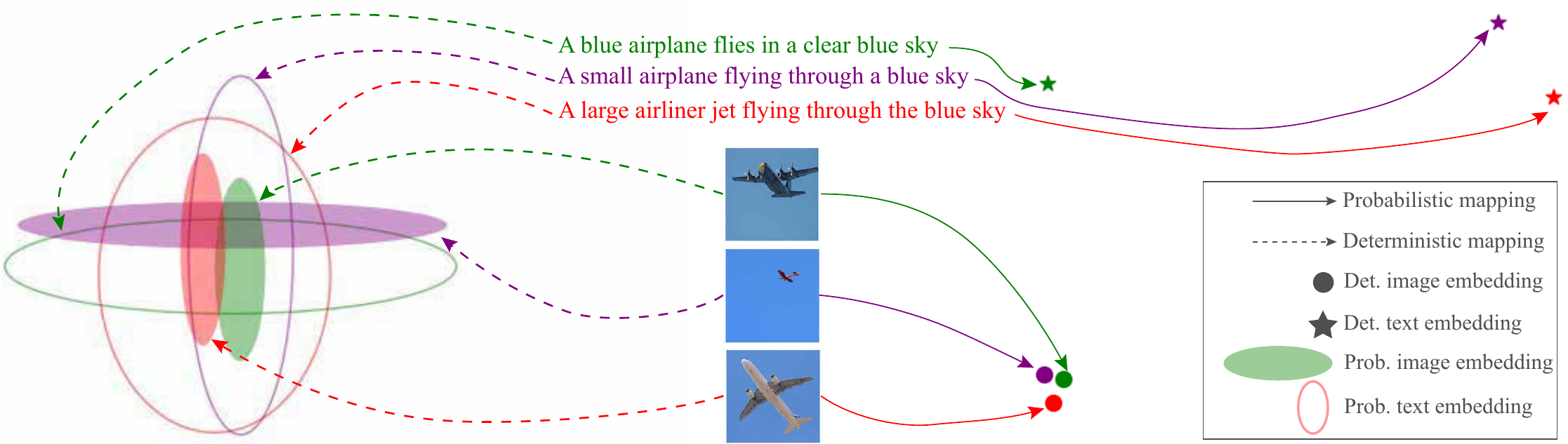}
    \caption{\small {\bf 2D t-SNE visualization of learned embeddings by \ours and VSE$\infty$}. The area of probabilistic embeddings denotes the uncertainty of each embedding, \ie, a more uncertain sample has a larger area.}
    \label{fig:vis_coco}
    \vspace{-.5em}
\end{figure}

\cref{fig:vis_coco} shows the t-SNE visualization of ViT-B/32 \ours and VSE$\infty$ in \cref{tab:main}.
As shown in \cref{fig:vis_coco}, the \ours embedding space captures the uncertain captions by enlarging their variances; thereby, each uncertain caption embedding can represent the uncertainty caused by the multiplicity of the dataset. On the other hand, due to the HNM strategy, the learned embedding space by VSE$\infty$ (\cref{fig:vis_coco} right) cannot correctly map three different images and captions with multiplicity.
We also observe the same phenomenon in \cref{fig:toy}; it shows that HNM will ruin the embedding space.
More details are in \cref{subsec:appendix_toy_exp}. We also provide more detailed explanation of \cref{fig:vis_coco} in \cref{subsec:appendix_tsne}.

\subsection{More applications}
\label{subsec:more_applications}
\paragraph{Large-scale retrieval system.}
Lack of scalability is a common drawback of probabilistic retrieval systems, \ie, it is difficult to apply probabilistic embeddings on a large-scale retrieval system with a billion-scale index. As our CSD (\cref{eq:prob_distance}) is the sum of Euclidean distance of $\mu$ and the intensity of $\sigma^2$ of each input, we can easily and efficiently combine \ours and approximated KNN (ANN). \cref{subsec:appendix_retrieval_strategies} describes the details of the modified ANN, and the comparisons of diverse retrieval strategies. CSD is not only stronger than other probability distances but also more practical.

\vspace{-.3em}
\begin{wraptable}{r}{0.425\linewidth}
\small
\setlength{\tabcolsep}{1pt}
\centering
\caption{\small {\bf ImageNet (IN) Zero-shot (ZS).}}
\label{tab:imagenet_zeroshot}
\vspace{-.5em}
\begin{tabular}{ccccccccc}
\toprule
Model & Prompts & Top-1 Acc \\ \midrule
\multirow{2}{*}{InfoNCE} & {\footnotesize \textit{``A photo of $\{\,\cdot\,\}$''}} & 31.85 \\
& {\footnotesize All 80 prompts}    & 35.50 \\ \midrule
\multirow{4}{*}{\ours} & {\footnotesize \textit{``A photo of $\{\,\cdot\,\}$''}} & 30.43 \\
& {\footnotesize All 80 prompts} & 34.22 \\
& {\footnotesize Top-K certain prompts} & 34.22 \\
& {\footnotesize Best top-K for each class} & \textbf{41.82} \\
\bottomrule
\end{tabular}
\vspace{-1em}
\end{wraptable}
\paragraph{Uncertainty-based prompt-filtering.}
Zero-shot (ZS) classification is the task of predicting an unseen class during training. Usually, ZS classification is done by converting class information as a text sentence (\eg, ``a photo of a cat'') and mapping into a shared embedding space with the input. For image ZS classification tasks, large-scale ITM pre-training, such as CLIP, has become a standard approach. Despite their usability and generalizability, ZS needs hand-crafted prompt engineering for converting class information into proper text sentences. For example, \citet{radford2021clip} showed that taking the average of 80 different context prompts improves ImageNet top-1 ZS accuracy by 3.5\% over a single prompt ({\small \textit{``a photo of $\{\,\cdot\,\}$''}}). However, designing the best-performing prompts for every novel task is time-consuming.

This paper investigates the potential of \ours for automatic prompt engineering using the learned text uncertainty: First, the uncertainties of prompts for each class are computed, (\eg, {\small \textit{``a photo of a cat''}}, {\small \textit{``a photo of many cat''}}, \ldots), and then the most uncertain text prompts are discarded; this process is computed directly on the ImageNet validation set (\ie, it is not a true ``ZS''. See \cref{subsec:appendix_prompt_tuning_details} for the more detailed discussion). \cref{tab:imagenet_zeroshot} shows a study on the proposed simple automatic prompt-filtering. For the experiment, ViT-B/16 models using InfoNCE loss and \ours are trained on CC3M \cite{sharma2018conceptual}, 12M \citep{changpinyo2021conceptual} and RedCaps \citep{desai2021redcaps}, for 32 epochs with 1024 batch size\footnote{\url{https://huggingface.co/SanghyukChun/PCMEPP-ViT-B-16-CC3M-12M-RedCaps}}. ``Top-K certain prompts'' denotes that every class uses the same top-K for the filtering, and ``Best top-K for each class'' denotes the best top-K for each class are chosen, \eg, ``terrace'' needs all 80 prompts, while ``pencil case'' only needs Top-1 certain prompt while other uncertain 79 prompts are discarded. More examples of the selected prompts are shown in \cref{fig:appendix_promt_image_examples}. With this simple strategy, the ZS performance is increased with a significant gap (30.43 $\rightarrow$ 41.82). The full description of the ZS experiment is provided in \cref{subsec:appendix_prompt_tuning_details}.

\section{Conclusion}
This paper brings attention to the importance of the FNs in ITM training datasets; a deterministic method will fail to capture the inherent ambiguity of the dataset, especially for a large backbone, such as ViT-L. This paper addresses the inherent ambiguity of ITM tasks by probabilistic embedding with a novel closed-form probability distance and a new matching objective for efficiency and effectiveness. The proposed method, \ours, is further enhanced by employing a pseudo-positive strategy and a mixed sample data augmentation strategy, thanks to the pair-wise loss function design. Experimental results demonstrate the extensibility of \ours to various applications, such as image-text cross-modal retrieval, mitigating noisy correspondences, automatic prompt-filtering for zero-shot classification, and understanding the inherent ambiguity of a dataset. Beyond performance, the learned uncertainty by \ours shows high interpretability of the datasets as well as the controllability by the users when the rejection of the retrieved items is required.

\ificlrfinal
\section*{Acknowledgement}
I would like to thank NAVER AI Lab colleagues for valuable discussions, including Sangdoo Yun, Wonjae Kim, Jiyoung Lee, Dongyoon Han, Byeongho Heo, Taekyung Kim, Song Park and Jung-Woo Ha. NAVER Smart Machine Learning (NSML) platform \citep{kim2018nsml} is used for the experiments.
I would like to express my sincere gratitude to my adorable cat, Seul Park, who served as the model for \cref{fig:arch}, and my wife, Song Park, the delightful companion who brightened my research journey.
\fi

\section*{Societal Impact}
This work aims to learn better image-text representations based on a probabilistic approach. As shown in the experiments, \ours has the potential to improve the interpretability and the controllability of learned representations by providing an additional degree of freedom to the users. Accordingly, \ours shares the potential impact of developing general image-text representations with better interpretability and controllability. For example, as shown by \citet{radford2021clip}, visual-textual representations trained on a large-scale web dataset often suffers from biases in the web; \ours can both mitigate or enhance the biases using its interpretability and controllability.

\appendix
\numberwithin{equation}{section}
\numberwithin{figure}{section}
\numberwithin{table}{section}

\section*{Appendix}
More additional materials are included here. More details of \ours are described in \S\ref{sec:appendix_method_details}, including the full derivation of the closed-form probabilistic distance (\S\ref{subsec:appendix_derivation}), the toy experiments (\S\ref{subsec:appendix_toy_exp}), comparisons between PCME and \ours (\S\ref{subsec:appendix_prob_distance_comparisons}), pseudo-code of \ours (\S\ref{subsec:appendix_pseudo_code}), and why applying PPs and MSDA is non-trivial to other methods (\S\ref{subsec:appendix_nontrivial_msda}). In experimental protocol details section (\S\ref{sec:appendix_experimental_protocol}), the benchmark dataset details (\S\ref{subsec:appendix_benchmark_details}), hyperparameter, resource details (\S\ref{subsec:appendix_hyperparameters}) and SWA details (\S\ref{subsec:appendix_swa}) are shown. Additional experimental results (\S\ref{sec:appendix_additional_experiments}), including comparisons with state-of-the-art (\S\ref{subsec:appendix_sota_comparisons}), the analysis of pseudo-positives (\S\ref{subsec:appendix_pp_nc_overfit}) the full ablation studies (\S\ref{subsec:appendix_more_ablation}), the t-SNE visualization details and related discussions (\S\ref{subsec:appendix_tsne}) the comparisons of different retrieval strategies (\S\ref{subsec:appendix_retrieval_strategies}), automatic prompt-filtering experiments (\S\ref{subsec:appendix_prompt_tuning_details}), and the full experimental results and error bars (\S\ref{subsec:appendix_full_experiments}) are presented.
Finally, discussions and limitations of \ours are in \S\ref{sec:limitations}.

\section{Method Details}
\label{sec:appendix_method_details}

\subsection{Derivation of the closed-form probability distance}
\label{subsec:appendix_derivation}

In this subsection, the full derivation of \cref{eq:prob_distance} is shown. We first show two simple well-known lemmas and conclude the full proof using them.

\begin{lemma}
\label{thm:lemma1}
Let $X$ and $Y$ be independent normally distributed random variables where $X \sim \mathcal N(\mu_X, \Sigma_X)$ and $Y \sim \mathcal N(\mu_Y, \Sigma_Y)$. Then, the subtraction between $X$ and $Y$ is another normal distribution, \ie, $(X - Y) \sim \mathcal N (\mu_X - \mu_Y, \Sigma_X + \Sigma_Y).$
\end{lemma}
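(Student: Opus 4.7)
The plan is to use the moment generating function (MGF) technique, which gives the cleanest route to this textbook result. First, I would recall that for a multivariate normal $X \sim \mathcal N(\mu_X, \Sigma_X)$, the MGF takes the form $M_X(t) = \exp\bigl(t^\top \mu_X + \tfrac{1}{2} t^\top \Sigma_X t\bigr)$ for $t \in \R^D$, and analogously for $Y$. By independence, $X$ and $-Y$ are independent, so $M_{X-Y}(t) = M_X(t) \cdot M_{-Y}(t) = M_X(t) \cdot M_Y(-t)$.

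Next, I would multiply these two MGFs and collect the linear and quadratic terms in $t$. The linear contributions combine to $t^\top (\mu_X - \mu_Y)$, while the quadratic contributions combine to $\tfrac{1}{2} t^\top (\Sigma_X + \Sigma_Y) t$, using the fact that the quadratic form is invariant under $t \mapsto -t$. The resulting expression is exactly the MGF of $\mathcal N(\mu_X - \mu_Y, \Sigma_X + \Sigma_Y)$, and the uniqueness of MGFs then forces $(X - Y)$ to have that distribution.

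No step here qualifies as a real obstacle; this is a standard textbook fact whose proof is essentially mechanical. If one preferred to bypass MGFs entirely, an equally short alternative is to observe that independence implies $(X, Y)$ is jointly Gaussian in $\R^{2D}$ with block-diagonal covariance $\mathrm{diag}(\Sigma_X, \Sigma_Y)$, and then invoke the elementary fact that any affine image of a Gaussian is Gaussian. Applying this to the linear map $A = [I,\ -I]$ with $X - Y = A\,[X^\top\ Y^\top]^\top$ reads off the mean as $\mu_X - \mu_Y$ and the covariance as $A\,\mathrm{diag}(\Sigma_X, \Sigma_Y) A^\top = \Sigma_X + \Sigma_Y$, completing the argument.
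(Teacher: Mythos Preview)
Your proposal is correct and mirrors the paper's own proof: the paper uses characteristic functions $\phi_X(t)=\exp\bigl(it^\top\mu_X-\tfrac12 t^\top\Sigma_X t\bigr)$ rather than MGFs, but the independence-factorization step and the combination of linear and quadratic terms in the exponent are identical. Your alternative affine-map route is also valid (and not in the paper), but the primary argument matches.
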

\begin{proof}
Let $\phi_X (u) = \exp (i t^\top \mu_X - \frac{1}{2}t^\top \Sigma_X t)$ be a characteristic function of normally distributed random variable $X$. Using the fact that $-Y \sim \mathcal N (-\mu_Y, \Sigma_Y)$, we can compute the summation of $\phi_X (u)$ and $\phi_{-Y} (u)$ as follows:
{\small
\begin{equation}
\phi_{X-Y}(u) = \exp (i t^\top \mu_X - \frac{1}{2}t^\top \Sigma_X t) \exp (-i t^\top \mu_Y - \frac{1}{2}t^\top \Sigma_Y t) = \exp(i t^\top (\mu_X - \mu_Y) - t^\top (\Sigma_X + \Sigma_Y) t).
\end{equation}
}%
Hence, $X - Y$ is another normal distribution, $\mathcal N (\mu_X - \mu_Y, \Sigma_X + \Sigma_Y)$.
\end{proof}

\begin{lemma}
\label{thm:lemma2}
Let $X \sim \mathcal N(\mu, \Sigma)$. Then $\mathbb E \| X \|_2^2 = \| \mu \|_2^2 + \text{tr}(\Sigma).$
\end{lemma}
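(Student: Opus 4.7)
The plan is to decompose the squared Euclidean norm into coordinate-wise contributions and then apply the elementary second-moment identity. Writing $X = (X_1, \ldots, X_D)^\top$, I would first use linearity of expectation to obtain $\mathbb{E}\|X\|_2^2 = \sum_{i=1}^D \mathbb{E}[X_i^2]$.

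For each coordinate, the standard identity $\mathbb{E}[X_i^2] = \mathrm{Var}(X_i) + (\mathbb{E}[X_i])^2 = \Sigma_{ii} + \mu_i^2$ applies. Summing over $i$ yields $\mathrm{tr}(\Sigma) + \|\mu\|_2^2$, which is exactly the claim. An equivalent route is to write $X = \mu + Y$ with $Y \sim \mathcal{N}(0, \Sigma)$ and expand $\|\mu + Y\|_2^2 = \|\mu\|_2^2 + 2\mu^\top Y + \|Y\|_2^2$; the cross term vanishes in expectation since $\mathbb{E}[Y] = 0$, leaving $\|\mu\|_2^2 + \mathbb{E}\|Y\|_2^2 = \|\mu\|_2^2 + \mathrm{tr}(\Sigma)$ after recognizing $\mathbb{E}\|Y\|_2^2 = \sum_i \mathbb{E}[Y_i^2] = \sum_i \Sigma_{ii}$.

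There is essentially no obstacle here, and in fact the result does not even require normality, only finite second moments; I would just mention this in passing rather than belabor it. The real purpose of the lemma is to close the derivation of equation (\ref{eq:prob_distance}): combining it with Lemma \ref{thm:lemma1} applied to $\Zv - \Zt \sim \mathcal{N}(\mu_v - \mu_t,\, \mathrm{diag}(\sigma_v^2 + \sigma_t^2))$ (using independence of the visual and textual embeddings together with the diagonal-covariance assumption) immediately gives $\mathbb{E}\|\Zv - \Zt\|_2^2 = \|\mu_v - \mu_t\|_2^2 + \|\sigma_v^2 + \sigma_t^2\|_1$, since the trace of a diagonal matrix is the $\ell_1$-norm of its diagonal. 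I would structure the write-up as a three-line chain of equalities so the reader can see the CSD formula emerge with no further work.
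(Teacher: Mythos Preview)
Your proof is correct. The paper takes a slightly different route: it writes $X = \mu + SZ$ with $S$ a square root of $\Sigma$ and $Z$ standard normal, expands the square, and then uses the cyclic trace identity $\mathbb{E}_Z\,\text{tr}(Z^\top S^\top S Z) = \text{tr}(S^\top S\,\mathbb{E}_Z[ZZ^\top]) = \text{tr}(\Sigma)$. Your coordinate-wise argument via $\mathbb{E}[X_i^2] = \mu_i^2 + \Sigma_{ii}$ is more elementary and avoids the unnecessary square-root decomposition; it also makes transparent, as you note, that normality plays no role beyond finite second moments. Your alternative centering argument $X = \mu + Y$ is essentially the paper's idea stripped of the $S$-reparameterization. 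Either version is cleaner than what is in the paper, and your closing remark about combining with Lemma~\ref{thm:lemma1} to obtain the CSD formula is exactly how the paper proceeds.
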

\begin{proof}
We first re-parameterize a random variable $X$ as $X = \mu + S Z$, where $S$ is the square root matrix of $\Sigma$, \ie, $S S^\top = \Sigma$, and $Z$ is a standard normal distribution. Note that $S$ always exists because $\Sigma$ is a positive semi-definite by definition. Using $\mathbb E [ Z ] = 0$, the property of Frobenius norm $\| A \|_F^2 = \text{tr} (A)$ and the property of trace $\text{tr} (AB) = \text{tr} (BA)$, we have:
\begin{equation}
\begin{split}
E \| X \|_2^2 = \mathbb E_Z [ \| \mu \|_2^2 + 2 \mu S Z + \| Z^\top S^\top S Z\|_2^2 ]
= \| \mu \|_2^2 + \mathbb E_Z \| Z^\top S^\top S Z\|_2^2\\
= \| \mu \|_2^2 + \mathbb E_Z \text{tr} (Z^\top S^\top S Z) = \| \mu \|_2^2 + \text{tr} (S^\top S ~\mathbb E_Z [ Z Z^\top ]) = \| \mu \|_2^2 + \text{tr} (\Sigma).
\end{split}
\end{equation}
\end{proof}

\begin{proposition}
Let $X$ and $Y$ be independent normally distributed random variables where $X \sim \mathcal N(\mu_X, \Sigma_X)$ and $Y \sim \mathcal N(\mu_Y, \Sigma_Y)$. Then we have $\mathbb E \| X - Y \| = \| \mu_X - \mu_Y \|_2^2 + \text{tr}(\Sigma_X + \Sigma_Y)$.
\end{proposition}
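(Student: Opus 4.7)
The plan is a two-line derivation obtained by chaining the two preceding lemmas; no new computation is required. First, I would set $W := X - Y$ and invoke Lemma~\ref{thm:lemma1}, whose characteristic-function argument uses the independence of $X$ and $Y$ to give that $W$ is again Gaussian with mean $\mu_W = \mu_X - \mu_Y$ and covariance $\Sigma_W = \Sigma_X + \Sigma_Y$. Second, I would apply Lemma~\ref{thm:lemma2} to the Gaussian $W$ to obtain $\mathbb{E}\|W\|_2^2 = \|\mu_W\|_2^2 + \mathrm{tr}(\Sigma_W)$, and then substitute the explicit forms of $\mu_W$ and $\Sigma_W$ back in to read off the right-hand side.

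Because both ingredients are already established, there is essentially no obstacle. The only point worth flagging is that the independence hypothesis on $X$ and $Y$ is exactly what licenses Lemma~\ref{thm:lemma1}, since the characteristic function of $X - Y$ factorizes as a product only under independence; this is the one place in the argument where the hypothesis is used. A minor notational remark: the left-hand side is written $\mathbb{E}\|X - Y\|$ in the statement, but for consistency with Lemma~\ref{thm:lemma2} and with \cref{eq:prob_distance} this should be read as $\mathbb{E}\|X - Y\|_2^2$.

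To tie the proposition back to the main text, I would close by specializing to diagonal covariances: when $\Sigma_X = \mathrm{diag}(\sigma_X^2)$ and $\Sigma_Y = \mathrm{diag}(\sigma_Y^2)$, the trace $\mathrm{tr}(\Sigma_X + \Sigma_Y)$ collapses to $\|\sigma_X^2 + \sigma_Y^2\|_1$. This is exactly the CSD formula in \cref{eq:prob_distance} applied to $\Zv$ and $\Zt$, so the proposition is the general version of the closed-form identity used throughout the paper.
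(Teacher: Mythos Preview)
Your proposal is correct and matches the paper's own proof exactly: the paper simply states ``By combining \cref{thm:lemma1} and \cref{thm:lemma2}, the proof is completed,'' which is precisely your two-step chaining of the lemmas. Your additional remarks on where independence enters, the missing square on the norm, and the specialization to diagonal covariances recovering \cref{eq:prob_distance} are all accurate and go a bit beyond what the paper spells out.
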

\begin{proof}
By combining \cref{thm:lemma1} and \cref{thm:lemma2}, the proof is completed.
\end{proof}

\subsection{Toy experiments}
\label{subsec:appendix_toy_exp}

\begin{figure}[t!]
    \centering
    \includegraphics[width=.5\linewidth]{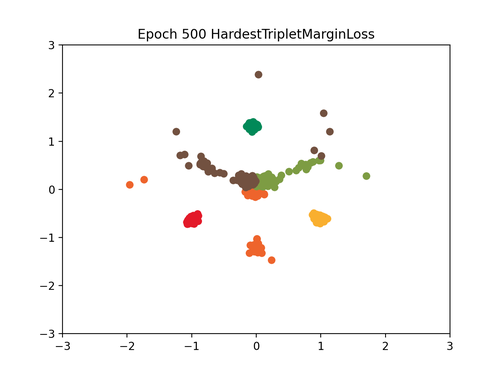}%
    \includegraphics[width=.5\linewidth]{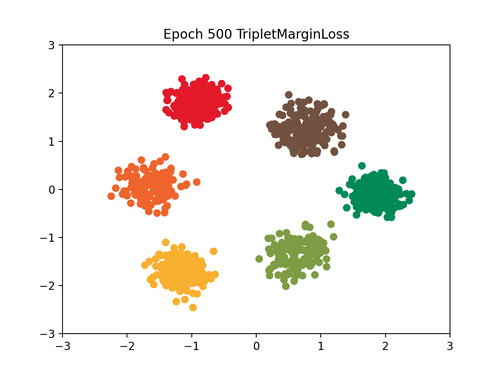}
    \includegraphics[width=.5\linewidth]{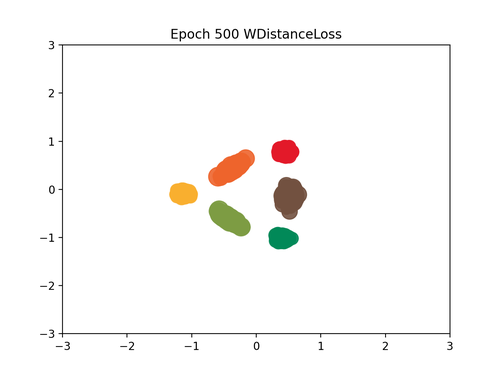}%
    \includegraphics[width=.5\linewidth]{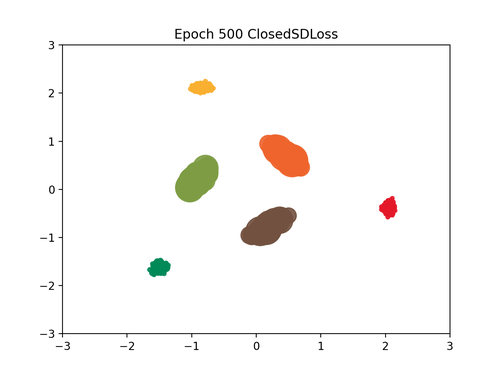}%
    \caption{\small {\bf Toy results for HNM \& SUM VSE, Wesserstien distance \& \ours.} The full animation can be found in \url{https://naver-ai.github.io/pcmepp/}.}
    \label{fig:toy}
\end{figure}

In \cref{subsec:prob_dist}, a 2-D toy dataset is introduced for comparing various objective functions under inherent uncertainty. The toy dataset has three classes with ``confusing samples'' between classes, \ie, a confusing sample randomly can be either class A or class B. The number of confusing samples are 30\% of the total data points. To synthesize the samples, a centroid is randomly chosen for each class. Using the centroid, each sample is randomly drawn from $\mu + 0.1 \times \mathcal N(0, I)$. 500 samples are drawn for each class and 150 samples of them are chosen as ``confusing samples'', \ie, there are 1500 samples with 1050 certain samples and 450 confusing samples. Then, $\log\sigma$ of each sample is randomly drawn from $\mathcal U (-1.5, 1.5)$ where $\mathcal U$ is a uniform distribution. In summary, the dataset has 350 confident samples for class 1, 2 and 3; 150 confusing samples for class (1, 2), (2, 3) and (3, 1).

To show the effects of different probabilistic distances, the samples are directly updated by the objective function \cref{eq:match_loss} with different metrics, \ie, a sample ($\mu$, $\sigma$) is directly updated by \cref{eq:match_loss}. The dataset is directly optimized using Adam optimizer \citep{kingma2014adam} with learning 0.02 during 500 epochs. The mini-bath size is set to 128. The same loss function with \ours (\ie, \cref{eq:match_loss}) is employed while the probabilistic distance \dZZ is chosen from either CSD (\cref{eq:prob_distance}) or Wasserstein distance.
VSE models are also trained with the hardest negative mining and without the negative mining (\ie, using the summation of triplet distances).
The last snapshots of each model are shown in \cref{fig:toy} The animated learning progress of each method can be found in
\url{https://naver-ai.github.io/pcmepp/}.

As described in \cref{subsec:prob_dist}, we desire the learned uncertainty can capture the data uncertainty, \ie, we expect that certain samples have small $\sigma^2$, while uncertain samples have large $\sigma^2$. After training, we observe that the average $\sigma^2$ for certain samples and uncertain samples by \ours are 1.68 and 3.05, respectively. On the other hand, Wasserstein distance shows 2.69 and 2.80, respectively. The result and other experiments on large-scale datasets (\cref{subsec:abl}) support that \ours is a proper probability distribution to capture uncertainty, while Wasserstein is not.

\subsection{Comparisons with PCME and \ours objective functions}
\label{subsec:appendix_prob_distance_comparisons}

As both PCME and \ours aim to learn probabilistic embeddings, they share the nature of the probabilistic embeddings: we train an encoder that maps an input to a mean vector and a variance vector and train the encoder by maximizing the negative loglikelihood (NLL) using the extracted distributions, namely $\min -\sum \log p(m | x_v, x_t)$. PCME and \ours optimize different NLLs where PCME is based on ``matching probability'' and \ours is based on binary cross-entropy.

We recall the definition of matching probability of PCME \citep{chun2021pcme}:
\begin{equation}
\label{eq:appendix_matching_prob}
p(m | x_v, x_t) = \mathbb E_{Z_v, Z_t} \sigmoidtwo (-a \| Z_v - Z_t \|_2 + b) \approx \frac{1}{J^2} \sum_{z_v, z_t} \sigmoidtwo (-a \| z_v - z_t \|_2 + b),
\end{equation}
where $J$ is the number of samples $z_v$ and $z_t$. PCME directly optimized the negative log-likelihood:
\begin{equation}
\label{eq:appendix_pcme}
m_{vt} \log \sum_{z_v, z_t} \sigmoidtwo \left(-a \| z_v - z_t \|_2 + b\right) + (1 - m_{vt}) \log \sum_{z_v, z_t} \sigmoidtwo \left(a \| z_v + z_t \|_2 + b)\right.
\end{equation}
On the other hand, \ours optimizes the NLL using binary cross-entropy loss (\cref{eq:match_loss}):
\begin{equation*}
p(m | x_v, x_t) = \text{sigmoid} (-a \mathbb E_{Z_v, Z_t} \|Z_v - Z_t\|^2 + b).
\end{equation*}
\cref{eq:appendix_pcme} and \cref{eq:match_loss} share a similar formulation, but the position of the expectation is different. As the expectation is located at the outside of $\sigmoidtwo$, \cref{eq:appendix_matching_prob} cannot be computed in a closed-form solution, but our distance can. \ours has two benefits over PCME: (1) our form has a closed-form solution, while PCME cannot. (2) our form can be naturally adopted into the binary cross entropy loss function, which is known to be stable and perform well in large-scale training \citep{resnetstrikesback}.

Furthermore, as pointed out in \cref{sec:intro}, The computational cost of PCME depends on the number of MC samples $J$, because it needs to compute $O(J^2)$ pairwise distances between all samples. When we use the same setting of the paper ($J=8$), we observe that \ours 25 epoch training takes 106,311 secs (1 day and 5 hours), while PCME 25 epoch training takes 141,694 secs (1 day and 15 hours) on a single V100 GPU. Overall, PCME needs 33\% more training time compared to \ours. Note that if we increase the sampling size $J$, the gap becomes larger. Another issue of the PCME sampling is that we need more memory size when computing the Monte Carlo approximation for a larger sampling size. Overall, PCME needs more forward time than \ours (33\% more), and more memory size than \ours (on average, 18\% more, but it is not a rigorous comparison because PCME has higher peak memory usage).

\subsection{\ours Pseudo-code}
\label{subsec:appendix_pseudo_code}

\cref{fig:appendix_algorithm} shows the PyTorch style pseudo-code of \ours. Note that $\mu$ and $\sigma$ are extracted from the augmented inputs, such as MSDA (\cref{subsec:msda}) and SizeAugment \citep{chen2021vseinfty}.

\begin{figure*}
\begin{lstlisting}[language=Python]
def compute_loss(v_mu, v_sig, t_mu, t_sig, matched):
    """v_mu, v_sig: mean and variance for (mixed) images (N by D)
       t_mu, t_sig: mean and variance for captions (M by D)
       matched: denoting (i, j) image, caption pair is matched.
                values are between 0 and 1 (N by M)"""
    # compute a closed-form distance
    mu_dist = ((v_mu.unsqueeze(1) - t_mu.unsqueeze(0)) ** 2).sum(-1)
    sigma_dist = ((v_sig.unsqueeze(1) + t_sig.unsqueeze(0))).sum(-1)

    # a, b: a learnable affine transform
    logits = -a *  (mu_dist + sigma_dist) + b

    # match loss can be easily computed by BCE loss
    match_loss = BCE(logits, matched)

    # compute pseudo-positive (pp) match loss
    gt_labels, gt_indices = torch.max(matched, dim=1)
    gt_vals = logits[:, gt_indices].diag()
    pseudo_gt_indices = (logits >= gt_vals)
    pp_matched = (gt_labels.unsqueeze(1) * (pseudo_gt_indices))
    matched[pseudo_gt_indices] = pp_matched[pseudo_gt_indices]
    pp_match_loss = BCE(logits, matched)

    # compute VIB loss
    v_vib = -0.5 * (1 + torch.log(v_sig) - v_mu ** 2 - v_sig).mean()
    t_vib = -0.5 * (1 + torch.log(t_sig) - t_mu ** 2 - t_sig).mean()
    vib_loss = v_vib + t_vib

    # final loss, alpha and beta are hyperparemeters
    return match_loss + alpha * pp_match_loss + beta * vib_loss
\end{lstlisting}
\caption{\small {\bf PyTorch pseudo-code of \ours.} Here, \texttt{v\_sig} and \texttt{t\_sig} are computed by taking an exponential to the output of $\log\sigma^2$ heads. \texttt{BCE} denotes a binary cross-entropy function.}
\label{fig:appendix_algorithm}
\end{figure*}

\begin{figure}
    \centering
    \includegraphics[width=\linewidth]{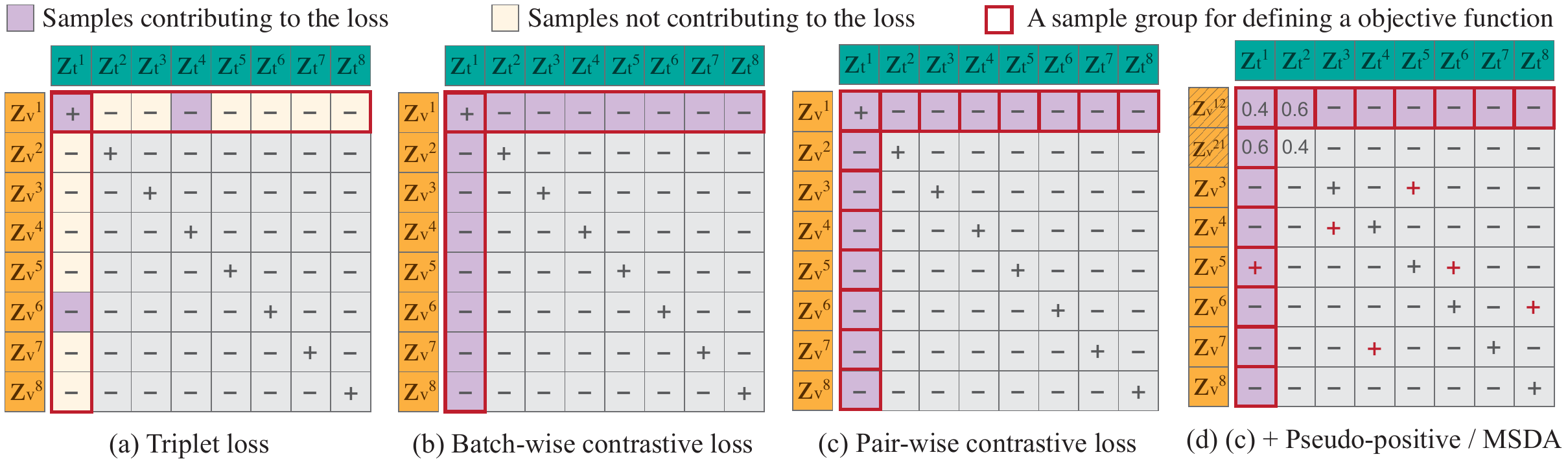}
    \caption{\small {\bf Comparisons of different objective functions.} For given $i$-th visual embeddings $\mathbf{z}_v^i$ and $j$-th textual embedding $\mathbf{z}_t^j$, we illustrate how each sample contributes to different loss functions. (a) Only two image-caption pairs contributed to the loss in each row/column for triplet loss: $\mathcal L_v^{i} = [\| z_v^{i} - z_t^{i} \| - \| z_v^{i} - z_t^{n} \| + \alpha]_+$, where $\mathcal L_v^{i}$ is the loss value of $i$-th visual feature $z_v^{i}$, and $\mathcal L_t^{j}$ is defined similarly. (b) Batch-wise contrastive loss, such as InfoNCE, is defined for each row/column: $\mathcal L_v^{i} = \text{CE}\left( \frac{\exp(-\| z_v^{i} - z_t^{i} \|)}{\sum_{j} \exp(-\| z_v^{i} - z_t^{j} \|)}, i\right)$, where CE denotes cross-entropy loss. Namely, the entire textual features are used to compute the loss value of $i$-th visual feature. (c) Pair-wise contrastive loss, such as \ours, is defined for each image-caption pair: $\mathcal L_{v}^{ij} = \text{BCE}(\sigmoidtwo(d (\Zv, \Zt)), \mathbb I_{i=j})$, where $d(\cdot, \cdot)$ is CSD (\cref{eq:prob_distance}) and $\mathbb I$ is an indicator function. Hence, our loss is computed multiple times for each row/column: $\mathcal L^\text{pairwise} = \sum_{i,j} \mathcal L^{ij}$. On the other hand, (a) and (b) are computed by $\mathcal L^\text{others} = \sum_{i} \mathcal L_{v}^{i} + \sum_{j} \mathcal L_{t}^{j}$. (d) As our loss is computed pair-wise, it is straightforward to apply pseudo-positives (PPs) or mixed sample data augmentation (MSDA), while it is not trivial to apply PP and MSDA to other methods as described in \cref{subsec:appendix_nontrivial_msda}.}
    \label{fig:loss_comparison}
    \vspace{-.5em}
\end{figure}

\subsection{Why is it non-trivial to apply MSDA to previous methods?}
\label{subsec:appendix_nontrivial_msda}
Although this paper does not propose a new MSDA, the contribution of this paper lies in applying MSDA to the relational datasets. For example, applying MSDA to classification is straightforward because the mixed sample does not affect the other samples in the mini-batch. However, in the relational training objectives, such as triplet loss or contrastive loss, a mixed sample affects the other samples in the batch as well. Especially, the triplet loss is impossible to handle MSDA, because the core concept of MSDA is the smooth label, but the triplet loss cannot handle smooth label, because it has to construct a triplet of the selected sample, the positive sample, and the negative sample. It is non-trivial to define positive and negative samples when the label is smoothed (See \cref{fig:loss_comparison} a). For example, assume that we set a match annotation of $v_a$, $t_a$ to 0.6 from 0.0. In this case, it is non-trivial to build triplets using this annotation. Moreover, if we introduce mixed samples and mixed labels, the problem becomes more complex. How can we handle $v_{a, b}$ (a mixed image of $x_a$ and $x_b$) and $t_a$, or $v_{b, a}$ and $t_b$ using a triplet relationship, or a pairwise relationship? Therefore, it is non-trivial to apply PPs and MSDA for the triplet-based methods.

Similarly, a batch-wise contrastive loss, such as InfoNCE, is also a little bit tricky to control the effect of smooth labels (See \cref{fig:loss_comparison} b) because the mixed samples are combined in the denominator term of Softmax. On the other hand, the proposed pairwise contrastive loss can directly apply smooth labels because each loss computation is invariant to the other samples, but only the given pairs affect the loss computation as shown in \cref{fig:loss_comparison} c.

One of the technical contributions of this paper is allowing smooth labels and mixed samples by designing a pairwise loss that is not affected by the other data samples. As shown in \cref{fig:loss_comparison} d, each loss computation of \ours is independent of the other pairs, while triplet loss or batch-wise contrastive loss is dependent on the relationships of other pairs.

\section{Experimental Protocol Details}
\label{sec:appendix_experimental_protocol}

\subsection{More details of benchmark datasets}
\label{subsec:appendix_benchmark_details}

\begin{figure}[t]
    \centering
    \includegraphics[width=\linewidth]{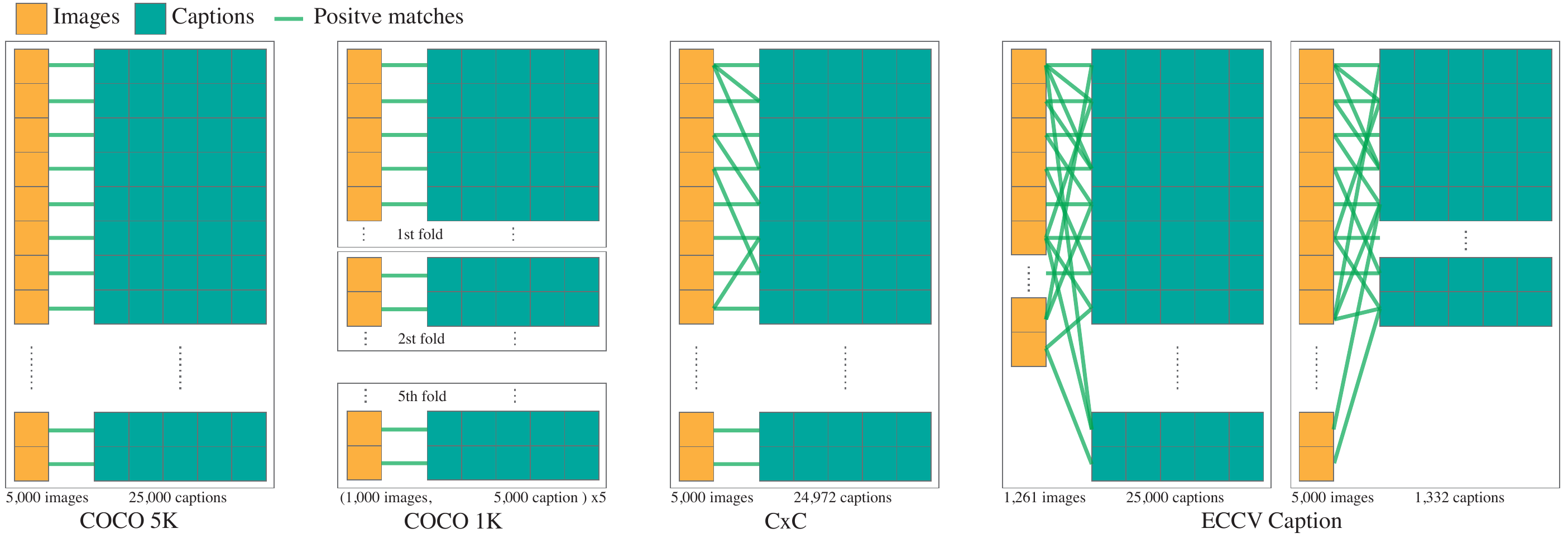}
    \caption{\small {\bf Difference between COCO 5K, 1K \citep{chen2015cococaption}, CxC \citep{parekh2020cxc} and ECCV Caption \citep{chun2022eccv_caption}}. All matches not illustrated in the image are negative. ECCV Caption has separated query sets for each modality, while other datasets use the same images and captions for both query and gallery.}
    \label{fig:appendix_datasets}
\end{figure}
\begin{table}[t]
\centering
\small
\caption{\small {\bf Hyperparameter details}}
\label{tab:hparam}
\begin{tabular}{lrr}
\toprule
Method                & ViT B/32, B/16, L/14 COCO & ViT B/16 CC3M, 12M and RedCaps \\
\midrule
Epochs                & 25                  & 32            \\
\midrule
Batch size            & 128                 & 1,024         \\
Optimizer             & AdamP               & AdamW         \\
Initial learning rate & 0.0005              & 0.0005        \\
LR scheduling         & Step & linear warmup and cosine     \\
Layer-wise LR decay   & 0.7                 & -             \\
Visual backbone LR decay & 0.01             & -             \\
Textual backbone LR decay & 0.1             & -             \\
$\beta_1, \beta_2, \varepsilon$ & 0.9, 0.999, $10^{-8}$ & 0.9, 0.98, $10^{-6}$ \\
Weight decay          & 0.0001              & 0.2           \\
\midrule
VIB $\beta$           & 0.0001              & $10^{-6}$     \\
PP $\alpha$           & 0.1                 & 0             \\
MSDA CutMix/Mixup $\lambda$, mix ratio & 2/2/25\% & -/-/0\% \\
Size Augment          & \yesmark            & \nomark       \\
\midrule
Embedding dimension   & 1024                & 512           \\
Initial $a$ and $b$   & 5/5                 & 1/1           \\
\midrule
Resources and training hours            & ViT B/32 1 V100 (38 hours) & 8 V100 (17 hours) \\
                      & ViT B/16 1 V100 (75 hours) & \\
                      & ViT L/14 8 V100 (62 hours) & \\
\bottomrule
\end{tabular}
\end{table}

\ours is evaluated on MS-COCO Caption \citep{chen2015cococaption}, a widely used ITM benchmark, containing 123,287 images from MS-COCO \citep{lin2014coco} and five human-annotated captions per image. 113,287/5,000/5,000 images are used for training/validation/testing \citep{karpathy2015deep}. Although Recall@$k$ (\Rk) is a common evaluation metric in COCO Caption, as \citet{musgrave2020metric} showed, \Rk is often insufficient to measure retrieval performances. Furthermore, recent studies \citep{parekh2020cxc, chun2022eccv_caption} observed that many COCO Caption negatives are actually positives; \eg, \citet{chun2022eccv_caption} showed that 88.2\% and 72.1\% positive images and captions are annotated as negative in COCO. In other words, COCO \Rk, relying on the noisy COCO annotation \mvt, is not fully reliable.

To mitigate the problem of \Rk evaluation, two extended benchmarks, ECCV Caption (EC) \citep{chun2022eccv_caption} and CxC \citep{parekh2020cxc}, are employed for the test split. Both datasets are validated by human annotators; EC contains more plentiful positives than CxC but its queries are the subset of the original COCO Caption; CxC has fewer positives than EC, but its annotations cover the whole COCO test split, and the annotations are less noisy. Note that the original COCO Caption, EC, and CxC have the same images and captions (\xv, \xt) but with different match annotations \mvt.

\cref{fig:appendix_datasets} illustrates the differences between evaluation datasets. Note that all evaluation benchmarks use the same training dataset described in \S\ref{subsec:experimental_protocol}. The COCO Caption evaluation split consists of 5,000 images and 25,000 captions. \textbf{COCO 5K} uses the full 5,000 images and 25,000 captions where each image has five positive captions and each caption only has one positive image. For evaluation, COCO 5K measures image-to-text retrieval performances by setting 5,000 images as queries and 25,000 captions as galleries, while text-to-image retrieval performances are measured in the opposite way. \textbf{COCO 1K} uses the same positive relationships as COCO 5K, but COCO 1K uses the subset of COCO 5K, \ie, there are 1,000 images and their corresponding 5,000 captions for COCO 1K split. COCO 1K measures the performances by taking an average of five different splits.

CxC \citep{parekh2020cxc} and ECCV Caption \citep{chun2022eccv_caption} use the same images and captions of COCO 1K/5K, but with more positive annotations. CxC uses the entire images and valid 24,972 captions among 25,000 captions (by omitting \textit{``I cannot see any image''} captions). CxC has more positive annotations than COCO, but there are still many missing positives in CxC because their approach is mostly focused on text similarity, not image-text similarity. On the other hand, ECCV Caption is designed for handling false negatives of image-text pairs. ECCV Caption uses the subset of images and captions for the queries, but their retrieval database is the full dataset, \ie, when performing image-to-text retrieval, the number of query images is 1,261 and the number of gallery captions are 25,000; for text-to-image retrieval, the number of query texts is 1,332 and the number of gallery images is 5,000.

As discussed by \citet{musgrave2020metric} and \citet{chun2022eccv_caption}, Recall@K is not an informative metric for measuring retrieval performances in terms of precision. Due to this reason, this paper reports mAP@R and R-Precision of ECCV Caption as the main comparison metrics.

\subsection{Hyperparameter and resource details}
\label{subsec:appendix_hyperparameters}

All models are trained for 25 epochs using AdamP optimizer \citep{heo2021adamp} by setting the initial learning rate as 0.0005 and weight decay as 0.0001. The learning rate is decayed by a factor of 0.1 for the last 10 epochs. Following \citet{chen2021vseinfty}, different learning rate multipliers are applied for the visual backbone ($\times$0.01) and the textual backbone ($\times$0.1). The visual backbone is frozen for 2 epochs, and a linear learning rate warmup is applied for the first epoch after the freezing. Also, layer-wise learning rate decay (LLRD) for each transformer block is applied by 0.7. The batch size is set to 128. Lastly, for the generalizability of GPO, SizeAugment is employed as \citet{chen2021vseinfty}.

The hyperparameters of \ours are set as follows; the affine transform is initialized by $a=b=5$ in \cref{eq:match_loss}; $\alpha$ for pseudo-positives as 0.1; VIB $\beta$ as 0.0001. \ours mixes 25\% of images in the mini-batch by Mixup or CutMix with a mixing ratio drawn from $\text{Beta}(2, 2)$. For comparison methods, The triplet loss margin is set to 0.2 (for VSE$\infty$ \citep{chen2021vseinfty}) and the initial softmax temperature for InfoNCE \citep{radford2021clip} is set to 1.0. PCME \citep{chun2021pcme} uses the same initialization of \ours for affine transform and VIB, while 8 samples are drawn per input for computing matching probability.

We use different hyperparameters for the large-scale pre-training task in \cref{subsec:more_applications}. As our implementation is based on \texttt{openclip} \citep{openclip}, we generally follow the official \texttt{openclip} hyperparameters. More details are in \cref{subsec:appendix_prompt_tuning_details}.

\cref{tab:hparam} shows the detailed hyperparameter settings and the detailed GPU resource information.

\subsection{SWA and model selection}
\label{subsec:appendix_swa}

For the evaluation, the best model based on the validation rsum is selected. However, the clean validation set is not always achievable; \eg, if the dataset has a significant distribution shift. For example, as we know that the COCO Caption is noisy due to a lot of FNs \citep{chun2022eccv_caption}, validating using the noisy annotations could lead to underperforming models. Instead of the best validation model selection, we can apply SWA \citep{izmailov2018swa}, where it does not need an additional optimization process to the existing training procedure, but only training weight trajectory (\ie, weights per every training epoch) is required. SWA is a weight average method of sufficiently trained weights. SWA aims to achieve flat minima, as well as more generalizable and robust solutions \citep{cha2021swad}. When we apply SWA to our model, we apply SWA for the last 10 epochs. 
Although the SWA models are not compared to the other models due to the fair comparison issue, I strongly encourage the use of SWA for future research.

\section{Additional Experimental Results}
\label{sec:appendix_additional_experiments}

\subsection{Comparisons with state-of-the-arts}
\label{subsec:appendix_sota_comparisons}

\begin{table}[t]
\caption{\small {\bf Comparisons with state-of-the-art models.} All numbers are reproduced by the official weights. We highlight \textbf{the best scores} except expensive retrieval methods, such as BLIP.}
\label{tab:sota}
\scriptsize
\centering
\setlength{\tabcolsep}{5pt}
\begin{tabular}{lcccccccc}
\toprule
&Efficient& \multicolumn{3}{c}{ECCV Caption} & CxC & \multicolumn{3}{c}{COCO}\\
Method & retrieval? & mAP@R & R-P & R@1                & R@1 & 1K R@1 & 5K R@1 & RSUM \\ \midrule
CVSE \citep{Wang2020CVSE} & \yesmark & 37.4 & 47.5 & 76.7 & 45.8 & 67.0 & 43.8 & 511.1 \\
VSRN \citep{Li2019VSRN} & \yesmark & 42.3 & \textbf{51.8} & 81.5 & 48.9 & 69.5 & 46.7 & 515.9 \\
NCR \citep{huang2021ncr} & \yesmark & 36.4 & 46.3 & 79.9 & 51.8 & 71.0 & 50.0 & 522.6 \\
VSE$\infty$ (BUTD region) \citep{chen2021vseinfty} & \yesmark & 40.5 & 50.0 & 82.5 & 52.4 & 72.2 & 50.4 & 527.5 \\
VSE$\infty$ (WSL) & \yesmark & \textbf{42.4} & 51.4 & 86.4 & 60.8 & 78.3 & 59.0 & 545.1 \\ 
VSE$\infty$ (B/16, our implementation) & \yesmark & 41.7 & 50.6 & 86.3 & 62.3 & 79.1 & 60.7 & 547.2 \\
\midrule
ViLT \citep{kim2021vilt} & \nomark & 34.6 & 44.3 & 77.8 & 53.7 & 72.8 & 52.2 & 528.6 \\
VinVL \citep{zhang2021vinvl} & \nomark & 40.8 & 49.6 & 87.8 & 67.8 & 82.4 & 66.4 & 555.5 \\
BLIP \citep{li2022blip} & \nomark & 40.5 & 48.4 & 91.0 & 74.3 & 86.1 & 73.1 & 564.4 \\
CLIP Zero-shot (L/14) \citep{radford2021clip} & \yesmark & 28.0 & 37.8 & 72.2 & 48.1 & 64.8 & 46.4 & 491.6 \\
\hline
\rowcolor{Gray}
\ours (B/16) & \yesmark & 42.2 & 51.2 & 86.6 & 62.9 & 79.6 & 61.3 & 548.5 \\
\rowcolor{Gray}
\ours (L/14) & \yesmark & 42.1 & 50.8 & \textbf{88.8} & \textbf{65.9} & \textbf{81.8} & \textbf{64.3} & \textbf{554.7} \\
\hline
\end{tabular}
\end{table}

\cref{tab:sota} shows the comparisons of \ours and state-of-the-arts with different backbones. Note that ViLT \citep{kim2021vilt}, VinVL \citep{zhang2021vinvl} and BLIP \citep{li2022blip} need heavy computations to perform retrieval because they have to compute pair-wise similarity for all pairs. For example, they need $O($5,000$\times$25,000$)$ computation budgets for measuring retrieval performances. On the other hand, methods with separated encoders just need $O($5,000$+$25,000$)$ computation budgets, 4,166 times smaller computation budgets compared to expensive retrieval methods. Therefore, the table only highlights the best retrieval performances among efficient retrieval methods for a fair comparison. 
Note that even in an expensive cross-attention architecture, a probabilistic approach can be beneficial, as shown by MAP \citep{ji2023map}. \ours is applicable to MAP by replacing the 2-Wasserstein distance with CSD. However, as it is out of scope of this work, the comparison with MAP is omitted in this paper.
\ours achieves the best recall scores for all evaluation benchmarks while showing second and third-best ECCV mAP@R and R-Precision. One possible explanation is the capability of the backbone architecture. For example, VSE$\infty$ with CLIP B/16 backbone shows much better recall scores than VSE$\infty$ with WSL backbone, but VSE$\infty$ (WSL) shows better mAP@R and R-Precision than the CLIP backbone. From this observation, we expect that \ours can outperform the previous retrieval methods in precision metrics if we train \ours using different backbones, such as large-scale weakly supervised learning (WSL) backbone \citep{mahajan2018exploring}.

\subsection{The effect of Pseudo-Positives (PPs)}
\label{subsec:appendix_pp_nc_overfit}

\begin{table}[h]
\small
    \centering
    \begin{tabular}{cccccccc}
    \toprule
    Noise ratio  & PP $\alpha$ & EC mAP & EC R-P & EC R@1 & COCO 1K R@1 & COCO 5K R@1 & RSUM \\ \midrule
    \multirow{3}{*}{20\%} & 0 & \textbf{37.9} & \textbf{47.8} & 79.0 & \textbf{71.9} & \textbf{50.9} & \textbf{526.0} \\
    & 0.01 & {37.7} & {47.6} & \textbf{80.0} & {71.6} & {50.4} & {524.6} \\
     & 0.1 & \textbf{37.9} & 47.7 & \textbf{79.7} & 70.8 & 49.5 & 522.4 \\ \midrule
    \multirow{3}{*}{50\%} & 0 & \textbf{35.7} & \textbf{45.8} & \textbf{76.3} & \textbf{67.6} & \textbf{45.5} & \textbf{511.0} \\ 
     & 0.01 & 35.2 & 45.3 & 75.1 & 66.1 & 43.4 & 506.8 \\
     & 0.1 & 34.4 & 44.6 & 75.0 & 65.7 &  44.0 & 503.9 \\
    \bottomrule
    \end{tabular}
    \caption{\small {\bf different noisy ratio results with varying PP $\alpha$.} $\alpha=0.1$ equals ``\ours (ours)'' in \cref{tab:noise_ratio}.}
    \label{tab:pp}
\end{table}
\vspace{-.5em}

\begin{figure}[t]
\centering
\begin{subfigure}[b]{.48\linewidth}
\includegraphics[width=\textwidth]{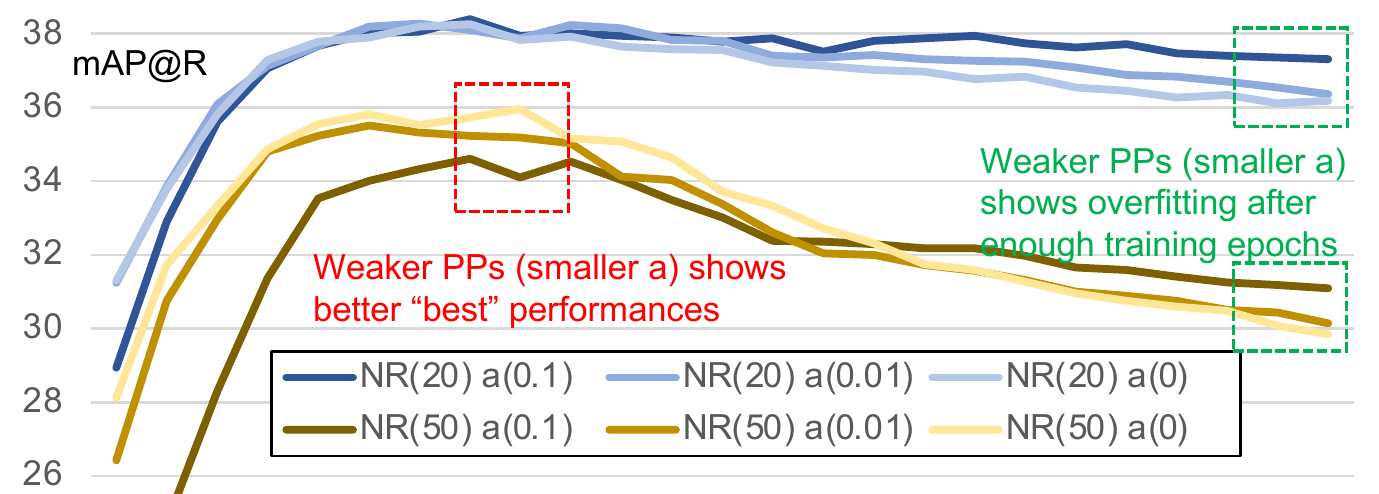}
\caption{\small \bf {Overfitting happens with weaker PP under NCs.}}
\label{fig:pp_overfit}
\end{subfigure}
\hfill
\begin{subfigure}[b]{.48\linewidth}
\includegraphics[width=\textwidth]{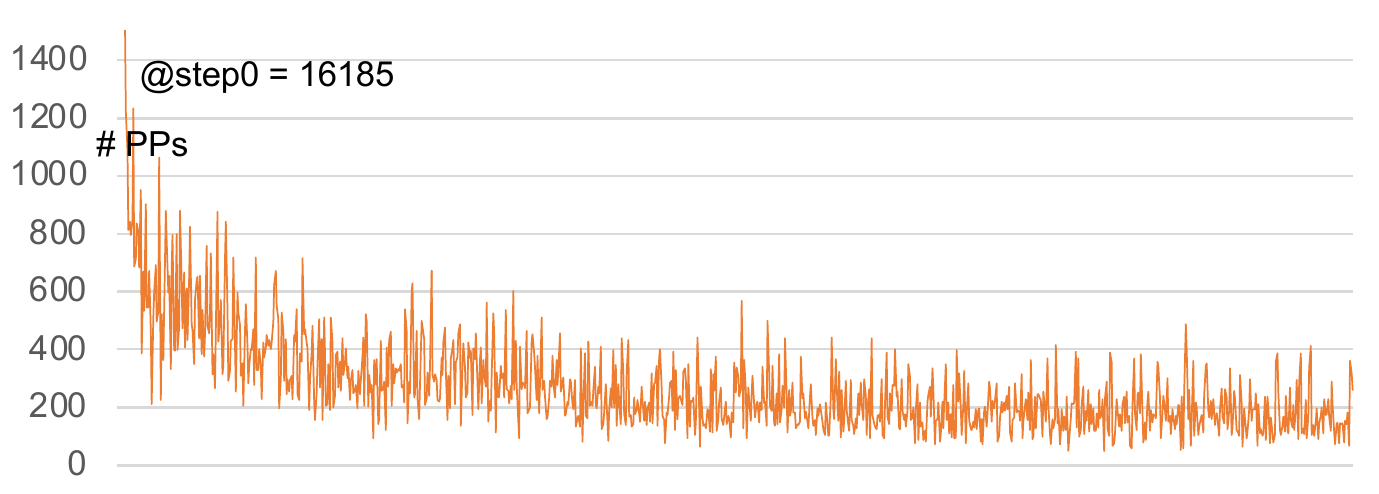}
\caption{\small \bf {Number of the PPs.}}
\label{fig:n_pps}
\end{subfigure}
\caption{\small {\bf The effect of Pseudo-Positives (PPs) during training.}}
\end{figure}

We evaluate various $\alpha$ under various noise ratios (NR).
\noindent \cref{tab:pp} shows two findings: (1) unlike ``no noise ratio'' case (\ie, \cref{tab:appendix_alpha_abl}), if there exist noisy correspondences (NC), PPs can harm the performances. One possible explanation is that PPs can be incorrect if there are too many NCs. (2) When we tune $\alpha$, we obtain the best performances in the 50\% NR scenario. However, we observe that a model is easily overfitted when we weaken the strength of PPs. \cref{fig:pp_overfit} shows that when $\alpha=0$, the best performed score is better than $\alpha=0.1$, but at the end of the training, $\alpha=0.1$ shows a weaker overfitting. As the current evaluation criterion is based on the validation-based model selection, even though PPs can prevent overfitting, PPs cannot be directly helpful for achieving the best performances under NCs. In other words, PPs are helpful for preventing overfitting and gradient vanishing (\cref{subsec:pp}); thereby, when there is less noise (\ie, no noise ratio scenario), PPs can improve performances with a gap, especially for handling false negatives (FNs) well (See mAP@R scores of 2nd and 4th row in \cref{tab:loss_abl} and \cref{tab:appendix_alpha_abl}).

The number of PPs during training is illustrated in \cref{fig:n_pps}. We observe that \# PPs is 16,185 at the first iteration, but it has converged to around 260 after 1 epoch.

\subsection{More ablation studies}
\label{subsec:appendix_more_ablation}

\begin{table}[h]
\caption{\small {\bf Pseudo-positive $\alpha$ ablation study.}}
\label{tab:appendix_alpha_abl}
\small
\centering
\begin{tabular}{lccccccc}
\toprule
            & \multicolumn{3}{c}{ECCV Caption} & CxC & \multicolumn{3}{c}{COCO}\\
$\alpha$ & mAP@R & R-P & R@1                & R@1 & 1K R@1 & 5K R@1 & RSUM \\ \midrule
0.1 & 40.2 & 49.8 & 83.1 & 56.5 & 75.1 & 54.8 & 536.0 \\
0.5 & 40.0 & 49.5 & 83.1 & 56.7 & 75.4 & 55.0 & 536.8 \\
2 & 40.1 & 49.7 & 83.0 & 56.5 & 75.1 & 54.8 & 535.8 \\
5 & 40.3 & 49.9 & 83.1 & 55.7 & 74.7 & 53.9 & 534.9 \\
10 & 40.2 & 49.9 & 82.5 & 54.5 & 73.7 & 52.6 & 531.9 \\
\bottomrule
\end{tabular}
\vspace{-.5em}
\end{table}

\begin{table}[t]
\caption{\small {\bf MSDA ablation study.}}
\label{tab:appendix_msda_abl}
\small
\centering
\setlength{\tabcolsep}{4pt}
\begin{tabular}{ccccccccccc}
\toprule
          &&&  & \multicolumn{3}{c}{ECCV Caption} & CxC & \multicolumn{3}{c}{COCO}\\
Mixup $\lambda$ & CutMix $\lambda$ & Mix ratio & in-batch? & mAP@R & R-P & R@1                & R@1 & 1K R@1 & 5K R@1 & RSUM \\ \midrule
2&0&25\%&\yesmark & 37.3 & 47.3 & 79.6 & 51.9 & 71.7 & 50.0 & 525.5 \\
0&2&25\%&\yesmark & 37.5 & 47.6 & 79.2 & 50.5 & 70.9 & 48.8 & 523.4 \\ \midrule
1&1&50\%&\nomark & 39.7 & 49.5 & 81.8 & 55.2 & 74.5 & 53.5 & 534.0 \\
1&1&25\%&\nomark & 39.9 & 49.6 & 82.3 & 55.5 & 74.6 & 53.8 & 534.4 \\
2&2&25\%&\nomark & 40.0 & 49.6 & 82.8 & 55.8 & 74.5 & 54.0 & 534.4 \\ \midrule
1&1&50\%&\yesmark & 39.9 & 49.6 & 82.7 & 55.4 & 74.4 & 53.7 & 534.1 \\
2&2&25\%&\yesmark & \textbf{40.1} & \textbf{49.7} & \textbf{82.9} & \textbf{56.5} & \textbf{75.0} & \textbf{54.7} & \textbf{535.9} \\
\bottomrule
\end{tabular}
\vspace{-.5em}
\end{table}

\begin{table}[t]
\caption{\small {\bf VIB $\beta$ ablation study.} $\times$1 denotes the paper's choice (0.0001).}
\label{tab:appendix_vib_abl}
\small
\centering
\begin{tabular}{cccccccccc}
\toprule
    & \multicolumn{3}{c}{ECCV Caption} & CxC & \multicolumn{3}{c}{COCO} &&\\
    VIB $\beta$ & mAP@R & R-P & R@1 & R@1 & 1K R@1 & 5K R@1 & RSUM & $\|\sigma\|_1$ & $-\rho$ \\ \midrule
    $\times$0 & 39.3 & 49.0 & 83.1 & 56.1 & 74.5 & 54.3 & 534.5 & 2$\times 10^{-4}$ & 0.76 \\
    $\times$0.1 & 39.8 & 49.4 & 83.2 & 57.1 & 75.6 & 55.4 & 537.4 & 1.1 & 0.87 \\
    $\times$0.2 & 39.7 & 49.2 & 83.6 & 57.1 & 75.6 & 55.3 & 537.4 & 2.2 & 0.91 \\
    $\times$0.5 & 39.7 & 49.3 & 82.9 & 56.7 & 75.5 & 55.0 & 537.2 & 4.2 & 0.92 \\
    $\times$1 & \textbf{40.1} & \textbf{49.7} & \textbf{83.1} & \textbf{56.8} & \textbf{75.4} & \textbf{55.1} & \textbf{537.0} & \textbf{7.1} & \textbf{0.94} \\
    $\times$2.0 & 40.0 & 49.6 & 82.6 & 56.7 & 75.4 & 55.0 & 536.7 & 11.6 & 0.95 \\
    $\times$5.0 & 40.1 & 49.7 & 83.2 & 56.1 & 74.8 & 54.3 & 535.5 & 23.1 & 0.91 \\
    $\times$10.0 & 40.1 & 49.7 & 83.0 & 55.4 & 74.6 & 53.6 & 534.4 & 37.6 & 0.9 \\
    \bottomrule
\end{tabular}
\end{table}

\begin{table}[t!]
\caption{\small {\bf Impact of architecture design choice.}
Details are the same as the previous tables.
}
\label{tab:arch_abl}
\small
\centering
\begin{tabular}{ccccccccccc}
\toprule
        &    & \multicolumn{3}{c}{ECCV Caption} & CxC & \multicolumn{3}{c}{COCO}\\
\# layers for $\log \sigma^2$ & GPO & mAP@R & R-P & R@1                & R@1 & 1K R@1 & 5K R@1 & RSUM \\ \midrule
1 & \nomark & 37.4 & 47.4 & 79.2 & 51.0 & 70.4 & 49.2 & 521.8 \\
2 & \yesmark & \textbf{40.2} & \textbf{49.7} & 83.2 & 56.6 & 75.3 & 54.8 & 536.5 \\
1 & \yesmark & 40.0 & 49.6 & \textbf{83.3} & \textbf{57.0} & \textbf{75.5} & \textbf{55.3} & \textbf{537.1} \\
\bottomrule
\end{tabular}
\vspace{-1em}
\end{table}

\paragraph{PP and MSDA.} 
\cref{tab:appendix_alpha_abl} shows the ablation study for pseudo positive $\alpha$. The table shows that \ours is not very sensitive to the choice of $\alpha$. We choose $\alpha = 0.1$, which shows the second-best EC mAP@R and COCO recall measures.
\cref{tab:appendix_msda_abl} shows the ablation study for the mixed sample data augmentation design choice. The design choice for \ours shows the best performance.

\paragraph{VIB.} 
The parameter study of VIB $\beta$ is provided in \cref{tab:appendix_vib_abl}. In the table, the average uncertainty quantification $\| \sigma \|_1$, the retrieval performances, and the correlation coefficient between the uncertainty and the average R@1, $\rho$, by varying VIB $\beta$ from 0 to 0.001 (where $\times 1 = 0.0001$) are reported. $\rho=-1$ means that uncertainty and R@1 are perfectly negatively correlated, \ie, a higher uncertainty shows a lower recall (what we expect). If $\rho=0$, then the uncertainty quantity and the recall performance are not correlated. Note that although we chose RSUM for performance comparisons due to the space limitation, we can observe a similar phenomenon for any other metric.

\cref{tab:appendix_vib_abl} shows three observations. (1) In terms of performance, there exists a sweet spot between $\beta$ $\times$0.1 and $\times$2, but the performance drop is relatively not significant (cf. RSUM of VSE$\infty$ and PCME are 536.5 and 532.0, respectively). (2) The average uncertainty quantity $\| \sigma \|_1$ is increased by the $\beta$ value. It supports that VIB regularizes probabilistic embeddings from variance collapse ($\sigma \rightarrow 0$). (3) if we did not use VIB ($\beta=0$), the correlation between uncertainty and recall is the smallest (-0.76), while a proper choice of $\beta$ improves $\rho$, \eg, $-0.92$ ($\times 0.2, \times 0.5$), $-0.95$ ($\times 1, \times 2$). Our choice $\times 1$ (\ie, 0.0001) shows the reasonable RSUM score (537.0, while the best is 538.2) and the second best $\rho$ (-0.94, while the best is -0.95).

\paragraph{Architecture.} \cref{tab:arch_abl} shows the architecture ablation study: (1) GPO improves overall performances; (2) if we use a more complex $\log \sigma^2$ head, ECCV Caption metrics are slightly improved by capturing ambiguity caused by FNs well. However, the performance improvements are marginal, and it shows inferior \Rk scores than a shallower $\log \sigma^2$ head. Therefore, \ours uses the number of layers for the $\log \sigma^2$ head as 1.

\subsection{t-SNE visualization details}
\label{subsec:appendix_tsne}
For \cref{fig:vis_coco}, t-SNE \citep{tsne} is respectively applied for \ours and VSE$\infty$ embeddings extracted from the COCO Caption test split. Then, three images and their corresponding captions (the same colored image and caption are a ``positive'' pair; otherwise, a pair is negative) in each embedding space are illustrated in the figure.

The purpose of \cref{fig:vis_coco} is to show how the learned embedding space by \ours can successfully capture the inherent ambiguity in the dataset. Qualitatively, \ours embedding space captures the ambiguity of the many-to-many relationships despite the false negatives. However, the figure can be misleading whether the purpose of \ours is to make ``overlap'' between two distributions.

The overlap between the two distributions itself is not directly related to the uncertainty. Conceptually, the overlap between two Gaussian distributions can be represented as the Bhattacharyya coefficient (or Bhattacharyya distance). Here, we recall the CSD’s main property (b) discussed in \cref{subsec:prob_dist}: if the match $m_{vt}$ is certain, then $Z_v$ and $Z_t$ have small variances. As discussed in \cref{subsec:prob_dist}, a similarity function, that measures whether two distributions are the same or not, cannot satisfy the property because it cannot handle the case when the distributions have the same $\mu$, but larger $\sigma$. There is no motivation to reduce the size of $\sigma$ using the distance. As shown in \cref{tab:probdist_abl}, Bhattacharyya distance is not an effective probabilistic distance for learning probabilistic embeddings as much as CSD. On the other hand, the learned embedding space by CSD-trained \ours is a reasonable probabilistic space. \cref{tab:inference_distance_comparisons} shows that even though we use Wasserstein distance as the similarity function of retrieval, the overall precision-based retrieval performances are almost preserved; it means that the probabilistic space learned by \ours is a sound metric space of Wasserstein distance.

Finally, instead of focusing on the overlap between two distributions, we focus on how CSD can learn the embedding space shown in \cref{fig:vis_coco}. We recall the properties of our desired probabilistic embedding space: (1) if the correspondence between a given image and caption is certain (\ie, they are certainly positive or negative), then the variance of each instance should be small, (2) if the correspondence is uncertain (\ie, the match is sometimes positive and sometimes negative. It can be happened due to the false negatives in the dataset as shown in \cref{fig:intro} and \cref{subsec:problem_def}), then the variance of each instance should be large. As mentioned before, CSD can give proper supervision for the desired embedding space. For example, let's approximate the plane figures in \cref{fig:vis_coco} to a visual embedding $v$ and the plane captions to a textual embedding $t$ (because they have very high semantic similarity). In this case, by choosing different image-caption pairs, the supervision between $v$ and $t$ can be either positive or negative because our training dataset only has one-to-one positive correspondences. In this case, our objective function enforces the CSD between the matches to be larger until the penalty for the positive supervision and the negative supervision are balanced. As shown in \cref{fig:vis_coco}, the visual embeddings and textual embeddings are aligned into almost the same place by making the $\mu$ distance closer but penalizing the uncertain supervision by enlarging $\sigma$.

\subsection{Comparisons of different retrieval strategies}
\label{subsec:appendix_retrieval_strategies}

\begin{table}[t]
\caption{\small {\bf Effect of inference methods.} We compare the mean-only inference and probability distance-based inferences using our ViT-B/32 SWA model. Each number is the average of different three runs.}
\label{tab:inference_distance_comparisons}
\small
\centering
\begin{tabular}{ccccccccc}
\toprule
               & & \multicolumn{3}{c}{ECCV Caption} & CxC & \multicolumn{3}{c}{COCO}\\
Method & Prob? & mAP@R & R-P & R@1                & R@1 & 1K R@1 & 5K R@1 & RSUM \\ \midrule
Mean only & \nomark & \textbf{40.2} & 49.8 & 83.5 & 56.9 & 75.2 & 55.2 & 536.3 \\
2-Wasserstein & \yesmark & \textbf{40.2} & \textbf{49.9} & 83.0 & 56.6 & 75.2 & 54.8 & 535.9 \\
CSD (ours) & \yesmark & \textbf{40.2} & 49.8 & \textbf{83.6} & \textbf{57.2} & \textbf{75.6} & \textbf{55.5} & \textbf{537.3} \\
\midrule
FAISS (Meany only) & \nomark & \textbf{40.1} & \textbf{49.7} & \textbf{83.5} & 56.4 & 74.7 & 54.6 & 531.2 \\
FAISS + $\sigma$ re-ranking & \yesmark & \textbf{40.1} & \textbf{49.7} & 83.2 & \textbf{56.6} & \textbf{74.8} & \textbf{54.8} & \textbf{531.7} \\
\bottomrule
\end{tabular}
\end{table}

A modified ANN for \ours is a two-step process. First, an Euclidean distance-based index system for $\mu$ is built as usual, while $\sigma^2$ is saved into key-value storage. Then, $K$ items are retrieved by performing ANN on the $\mu$ index. Lastly, the retrieved items are re-ranked by computing the summation of the $\mu$ distance and $\sigma^2$ value of the retrieved items. 

\cref{tab:inference_distance_comparisons} shows the comparisons of different retrieval strategies using \ours B/32 model. ``Mean only'' denotes the retrieval strategy only using $\mu$ vectors, without $\sigma$. ``2-Wasserstein'' and ``CSD'' denote that each probabilistic distance is used for the retrieval. In the table, we observe that mean-only retrieval shows sufficiently good performances, but using CSD improves the overall performance.

This paper additionally shows the approximated KNN (ANN) results using FAISS. First, a FAISS search index using $\mu$ vectors is built. Then, ANN is performed on the FAISS index to get the ranked list. Finally, the ranked list is re-ranked by CSD. Here, CSD can be efficiently computed by storing gallery $\sigma$ into a fast key-value storage, such as Redis. As shown in the table, ANN can be efficiently and effectively applied to \ours with a reasonable computation-performance trade-off.

\subsection{Details of automatic prompt-filtering by \ours}
\label{subsec:appendix_prompt_tuning_details}

For the experiments, a randomly initialized ViT-B/16 is trained by InfoNCE loss and \ours loss on Conceptual Caption 3M \citep{sharma2018conceptual}, 12M \citep{changpinyo2021conceptual} and RedCaps \citep{desai2021redcaps} using hyperparameters in \cref{tab:hparam}. The implementation is based on \texttt{openclip} \citep{openclip} software. For a fair comparison, the PCME++ model has the same architecture as the vanilla CLIP model except for the additional uncertainty head (as described in \cref{fig:arch}). For the stable training, the original CLIP loss and the PCME++ loss are used at the same time; solely using PCME++ loss also converges but shows much worse ZS performance. Note that we cannot apply PPs and MSDA for this experiment due to the additional CLIP loss. We also set longer warmup steps than the original setting ($\times$5 for PCME++). The pre-trained models are evaluated on the ImageNet \citep{imagenet} zero-shot (ZS) classification task. Specifically, 80 prompts provided by CLIP \citep{radford2021clip} (shown in the below) are used for the ZS classification. In \cref{tab:imagenet_zeroshot}, ``A photo of a $\{\,\cdot\,\}$'' denotes that only ``A photo of a $\{\,\cdot\,\}$'' prompt is used for the zero-shot classification, while ``All 80 prompts'' denotes that all 80 prompts are used for computing text embeddings and the average text embedding is used for the zero-shot classification.

\vspace{-1em}
\paragraph{80 base prompts.}
\texttt{\scriptsize
a photo of a \{\,\}., a bad photo of a \{\,\}., a photo of many \{\,\}., a sculpture of a \{\,\}., a photo of the hard to see \{\,\}., a low resolution photo of the \{\,\}., a rendering of a \{\,\}., graffiti of a \{\,\}., a bad photo of the \{\,\}., a cropped photo of the \{\,\}., a tattoo of a \{\,\}., the embroidered \{\,\}., a photo of a hard to see \{\,\}., a bright photo of a \{\,\}., a photo of a clean \{\,\}., a photo of a dirty \{\,\}., a dark photo of the \{\,\}., a drawing of a \{\,\}., a photo of my \{\,\}., the plastic \{\,\}., a photo of the cool \{\,\}., a close-up photo of a \{\,\}., a black and white photo of the \{\,\}., a painting of the \{\,\}., a painting of a \{\,\}., a pixelated photo of the \{\,\}., a sculpture of the \{\,\}., a bright photo of the \{\,\}., a cropped photo of a \{\,\}., a plastic \{\,\}., a photo of the dirty \{\,\}., a jpeg corrupted photo of a \{\,\}., a blurry photo of the \{\,\}., a photo of the \{\,\}., a good photo of the \{\,\}., a rendering of the \{\,\}., a \{\,\} in a video game., a photo of one \{\,\}., a doodle of a \{\,\}., a close-up photo of the \{\,\}., the origami \{\,\}., the \{\,\} in a video game., a sketch of a \{\,\}., a doodle of the \{\,\}., a origami \{\,\}., a low resolution photo of a \{\,\}., the toy \{\,\}., a rendition of the \{\,\}., a photo of the clean \{\,\}., a photo of a large \{\,\}., a rendition of a \{\,\}., a photo of a nice \{\,\}., a photo of a weird \{\,\}., a blurry photo of a \{\,\}., a cartoon \{\,\}., art of a \{\,\}., a sketch of the \{\,\}., a embroidered \{\,\}., a pixelated photo of a \{\,\}., itap of the \{\,\}., a jpeg corrupted photo of the \{\,\}., a good photo of a \{\,\}., a plushie \{\,\}., a photo of the nice \{\,\}., a photo of the small \{\,\}., a photo of the weird \{\,\}., the cartoon \{\,\}., art of the \{\,\}., a drawing of the \{\,\}., a photo of the large \{\,\}., a black and white photo of a \{\,\}., the plushie \{\,\}., a dark photo of a \{\,\}., itap of a \{\,\}., graffiti of the \{\,\}., a toy \{\,\}., itap of my \{\,\}., a photo of a cool \{\,\}., a photo of a small \{\,\}., a tattoo of the \{\,\}.
}

\begin{figure}
\centering
\begin{subfigure}[b]{0.45\textwidth}
\includegraphics[width=\linewidth]{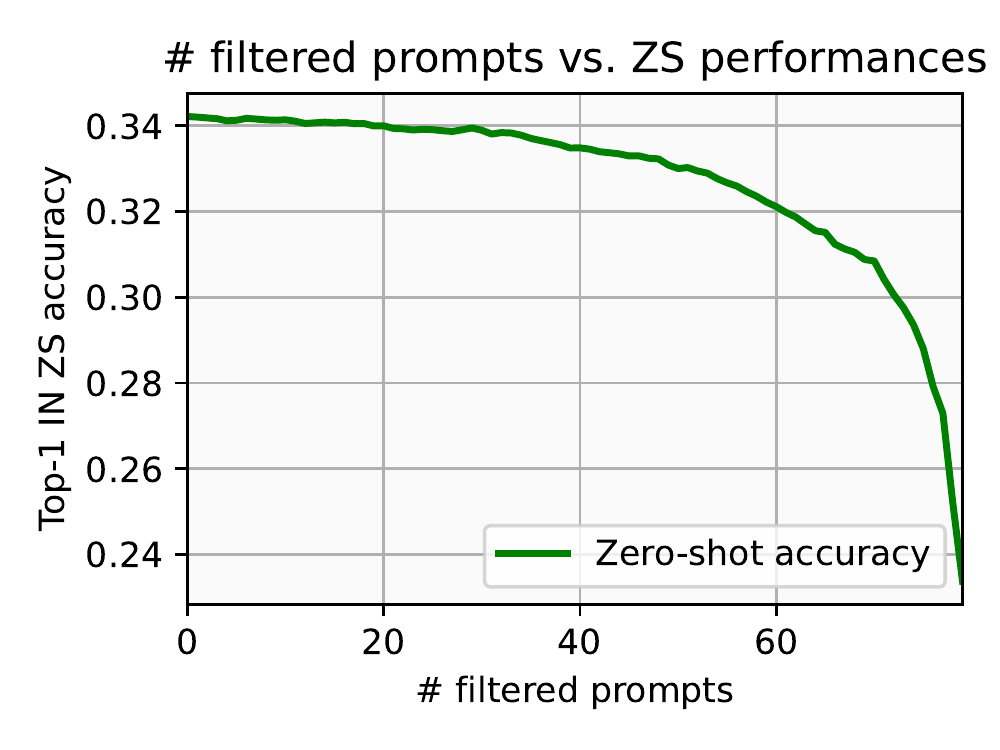}
\caption{\small \textbf{The same Top-K filtering results.}}
\label{fig:topk_filtering}
\end{subfigure}
\begin{subfigure}[b]{0.45\textwidth}
\includegraphics[width=\linewidth]{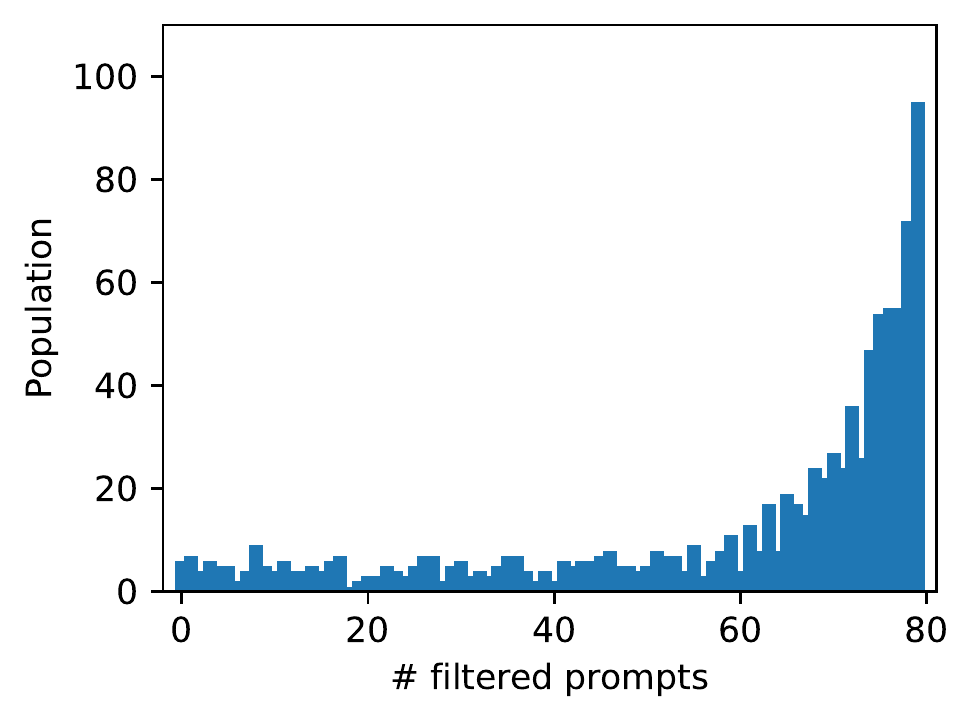}
\caption{\small \textbf{The best Top-K for each class.}}
\label{fig:topk_pop_histogram}
\end{subfigure}
\caption{\small \textbf{Automatic prompt-filtering results.} (a) shows the ImageNet (IN) zero-shot (ZS) results when prompts are filtered by the same top-K for every class. Applying all the same top-K filtering does not improve ZS performances. (b) shows the population of best top-K filtering for all classes. Here, 906 of classes among 1,000 classes show the best performance when using less than 10 prompts.}
\end{figure}

\begin{figure}[t!]
    \centering
    \includegraphics[width=\linewidth]{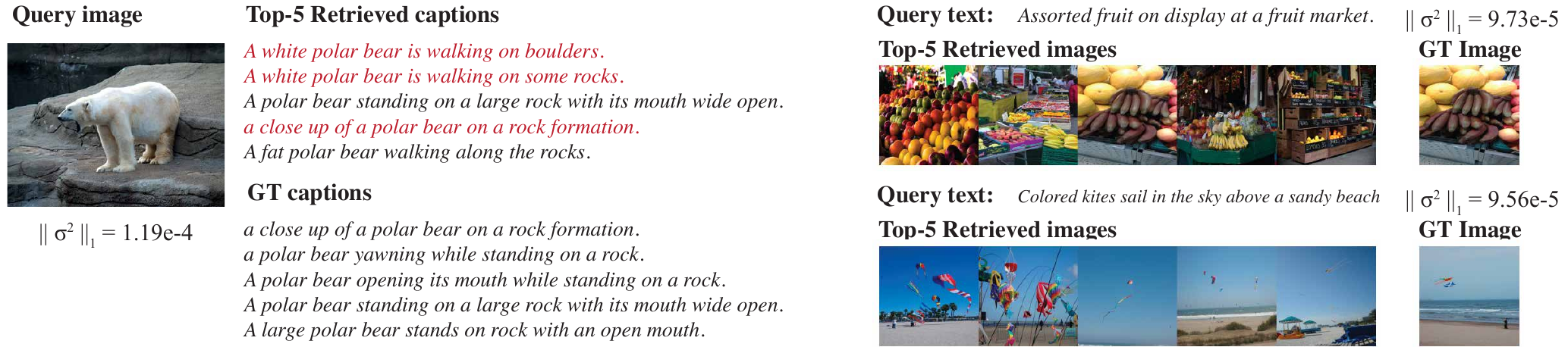}
    \includegraphics[width=\linewidth]{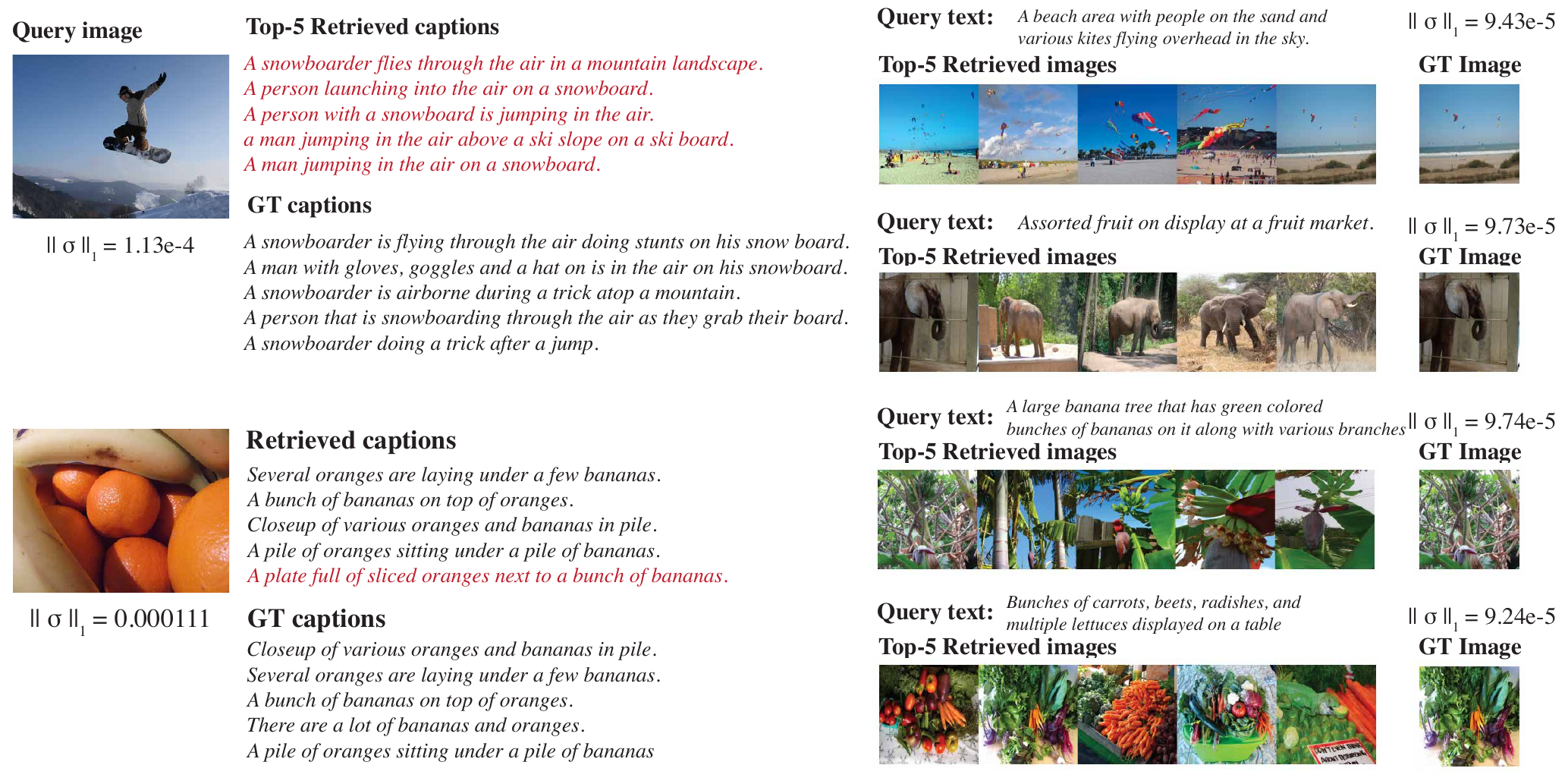}
    \caption{\small {\bf Example of images and captions with high uncertainty.}}
    \label{fig:uncertainty_visualize}
\end{figure}

This paper explores the potential of \ours for automatic prompt-filtering with simple uncertainty-based filtering. First, the prompts for every class are sorted by their uncertainty, \ie, $\| \sigma \|_1$. Then, uncertain prompts are filtered out, and the remaining prompts are used for ZS classification. Here, two strategies are tested. First, the same top-K uncertain prompts for all classes are filtered. As shown in \cref{fig:topk_filtering}, this strategy slightly improves the overall performances, but it only shows a marginal improvement against the ``all'' baseline (+0.04\%). To further improve the uncertainty-based filtering, the strategy with different top-K for different prompts is also explored. As shown in \cref{tab:imagenet_zeroshot}, this strategy shows very effective performance improvement against the baseline (+5.42\%). \cref{fig:topk_pop_histogram} shows the detailed population of the best top-K filtering per class. Here, the classes whose accuracy is 0\% are omitted. Interestingly, we observe that 10\% of classes (105) show the best ZS performances when all 80 prompts are used. On the other hand, about half of the classes (499) show the best performance when more than 35 prompts are filtered out.

\cref{fig:appendix_promt_image_examples} shows the examples of the filtered prompts and the ImageNet validation images. Interestingly, we can observe that the keywords {\scriptsize \texttt{``cropped''}} or {\scriptsize \texttt{``close-up''}} are selected for the Jack-o'lantern class due to the low-quality images of the class. On the other hand, a generic class, such as rapeseed, shows various prompts to cover its visual diversity.

This primitive study on uncertainty-based prompt-filtering has a limitation. This study has no validation split, \ie, the best top-K prompt for each class is directly searched from the validation split. Searching for the best top-K for each class without directly tuning on test split using strong probabilistic pre-trained image-text representations will be an interesting future research direction.

\subsection{Full experimental results}
\label{subsec:appendix_full_experiments}

The image-to-text and text-to-image R@5 and R@10 results are shown in \cref{tab:appendix_i2t_r5_r10_full} and \cref{tab:appendix_t2i_r5_r10_full}.
The full experimental results, including separated image-to-text and text-to-image retrieval results for the main table and standard errors, are included in \cref{tab:appendix_i2t_full} and \cref{tab:appendix_t2i_full}.
The full experimental numbers for all experiments can be found in \url{https://naver-ai.github.io/pcmepp/}.

\clearpage
\newcommand{\stdpm}[1]{\tiny \textcolor{red2}{($\pm$#1)}}
\newcommand{\stdempty}{\tiny \textcolor{red2}{($\cdot$)}}

\begin{table}[t]
\caption{\small {\bf Image-to-text retrieval R@5 and R@10 results.}}
\label{tab:appendix_i2t_r5_r10_full}
\small
\centering
\setlength{\tabcolsep}{5pt}
\begin{tabular}{llcccccc}
\toprule
            &       & \multicolumn{2}{c}{CxC} & \multicolumn{4}{c}{COCO}\\
Backbone & Method & R@5 & R@10 & 1K R@5 & 1K R@10 & 5K R@5 & 5K R@10 \\ \midrule
\multirow{8}{*}{\shortstack{ViT-B/32\\(151M)}} & VSE$\infty$ & 87.1 & 93.4 & 96.7 & 98.8 & 85.4 & 92.2 \\
&P2RM& 85.5 & 92.2 & 96.1 & 98.6 & 83.5 & 90.9\\
&DAA& 86.75 & 93.25 & 96.45 & 98.85 & 84.95 & 92.0\\
&InfoNCE & 87.3 & 93.4 & 96.5 & 98.9 & 85.9 & 92.3 \\
&PCME & 87.5 & 93.5 & 96.6 & 98.7 & 85.8 & 92.3 \\
&\ours ($\mu$ only) & 88.5 & 94.2 & 97.0 & 99.0 & 87.1 & 93.2 \\
&\ours & 88.4 & 94.0 & 97.0 & 99.0 & 87.0 & 93.0 \\
&\ours (SWA) & 88.5 & 94.0 & 97.0 & 99.0 & 87.1 & 92.9 \\
\midrule
\multirow{8}{*}{\shortstack{ViT-B/16\\(150M)}} & VSE$\infty$ & 91.1 & 95.6 & 97.8 & 99.4 & 89.9 & 94.8 \\
&P2RM& 86.0 & 92.7 & 96.2 & 98.7 & 84.3 & 91.5 \\
&DAA & 53.9 & 67.1 & 76.8 & 87.6 & 49.9 & 62.7\\
&InfoNCE & 90.9 & 95.8 & 97.8 & 99.3 & 89.7 & 94.9 \\
&PCME & 90.5 & 95.4 & 97.7 & 99.3 & 89.2 & 94.5 \\
&\ours ($\mu$ only) & 91.5 & 95.8 & 97.9 & 99.3 & 90.3 & 95.1 \\
&\ours & 91.3 & 95.7 & 97.9 & 99.3 & 90.1 & 95.0 \\
&\ours (SWA) & 91.5 & 95.9 & 97.9 & 99.3 & 90.4 & 95.1 \\
\midrule
\multirow{4}{*}{\shortstack{ViT-L/14\\(428M)}} & VSE$\infty$ & 58.8 & 72.6 & 82.2 & 91.4 & 55.7 & 69.4\\
&InfoNCE & 82.8 & 91.7 & 95.3 & 98.6 & 80.2 & 90.0 \\
&PCME & 91.8 & 95.9 & 98.1 & 99.4 & 90.7 & 95.2 \\
&\ours & 93.4 & 96.8 & 98.5 & 99.6 & 92.2 & 96.2 \\
\bottomrule
\end{tabular}
\vspace{-.5em}
\end{table}

\begin{table}[t]
\caption{\small {\bf Text-to-image retrieval R@5 and R@10 results.}}
\label{tab:appendix_t2i_r5_r10_full}
\small
\centering
\setlength{\tabcolsep}{5pt}
\begin{tabular}{llcccccc}
\toprule
            &       & \multicolumn{2}{c}{CxC} & \multicolumn{4}{c}{COCO}\\
Backbone & Method & R@5 & R@10 & 1K R@5 & 1K R@10 & 5K R@5 & 5K R@10 \\ \midrule
\multirow{8}{*}{\shortstack{ViT-B/32\\(151M)}} & VSE$\infty$ & 77.7 & 86.5 & 92.2 & 96.7 & 75.5 & 84.8 \\
&P2RM& 77.2 & 86.2 & 92.3 & 96.8 & 74.9 & 84.4\\
&DAA& 76.9 & 86.1 & 91.9 & 96.5 & 74.7 & 84.2\\
&InfoNCE & 77.3 & 86.5 & 92.3 & 96.9 & 75.1 & 84.7 \\
&PCME & 77.3 & 86.4 & 92.1 & 96.9 & 75.0 & 84.6 \\
&\ours & 78.5 & 87.1 & 92.8 & 97.1 & 76.5 & 85.4 \\
&\ours (SWA) & 78.6 & 87.3 & 92.8 & 97.1 & 76.5 & 85.5 \\
\midrule
\multirow{8}{*}{\shortstack{ViT-B/16\\(150M)}} & VSE$\infty$ & 82.0 & 89.5 & 94.2 & 97.5 & 80.3 & 88.2 \\
&P2RM& 78.7 & 87.3 & 93.0 & 97.2 & 76.6 & 85.7\\
&DAA& 50.3 & 62.5 & 73.4 & 85.1 & 47.1 & 59.1\\
&InfoNCE & 81.3 & 89.1 & 94.0 & 97.7 & 79.5 & 87.7 \\
&PCME & 80.9 & 88.9 & 93.9 & 97.7 & 79.1 & 87.5 \\
&\ours & 82.0 & 89.7 & 94.4 & 97.8 & 80.3 & 88.3 \\
&\ours (SWA) & 82.1 & 89.7 & 94.4 & 97.8 & 80.4 & 88.4 \\
\midrule
\multirow{4}{*}{\shortstack{ViT-L/14\\(428M)}} & VSE$\infty$ & 46.4 & 61.1 & 74.2 & 87.4 & 42.9 & 57.1\\
&InfoNCE & 73.6 & 84.2 & 91.3 & 96.4 & 71.0 & 82.3 \\
&PCME & 82.7 & 90.2 & 94.5 & 97.8 & 81.1 & 88.8 \\
&\ours &84.0 & 90.8 & 95.1 & 98.1 & 82.6 & 89.7 \\
\bottomrule
\end{tabular}
\vspace{-.5em}
\end{table}

\begin{table}[t]
\caption{\small {\bf Image-to-text retrieval full results.} P2RM ViT-B/32 result has no standard error because we failed to train multiple P2RM due to its instability. Full numbers also can be found in \url{https://github.com/naver-ai/pcmepp}.}
\label{tab:appendix_i2t_full}
\small
\centering
\setlength{\tabcolsep}{5pt}
\begin{tabular}{llcccccc}
\toprule
            &       & \multicolumn{3}{c}{ECCV Caption} & CxC & \multicolumn{2}{c}{COCO}\\
Backbone & Method & mAP@R & R-P & R@1                & R@1 & 1K R@1 & 5K R@1 \\ \midrule
\multirow{8}{*}{\shortstack{ViT-B/32\\(151M)}} & VSE$\infty$ & 32.3 \stdpm{0.2} & 43.3 \stdpm{0.1} & 77.2 \stdpm{0.4} & 63.8 \stdpm{0.4} & 82.0 \stdpm{0.3} & 62.3 \stdpm{0.5} \\
&P2RM & 30.2 \stdempty & 41.7 \stdempty & 72.2 \stdempty & 58.1 \stdempty & 78.6 \stdempty & 56.6 \stdempty \\
&DAA & 31.2 \stdpm{0.1} & 42.3 \stdpm{0.1} & 75.9 \stdpm{0.3} & 61.5 \stdpm{0.3} & 79.8 \stdpm{0.3} & 59.8 \stdpm{0.1} \\
&InfoNCE & 31.2 \stdpm{0.1} & 42.3 \stdpm{0.1} & 75.4 \stdpm{1.1} & 61.8 \stdpm{0.1} & 80.3 \stdpm{0.6} & 60.1 \stdpm{0.2} \\
&PCME & 31.2 \stdpm{0.0} & 42.3 \stdpm{0.0} & 74.9 \stdpm{0.3} & 61.5 \stdpm{0.6} & 80.1 \stdpm{0.2} & 59.9 \stdpm{0.6} \\
&\ours ($\mu$ only) & 32.1 \stdpm{0.2} & 43.1 \stdpm{0.2} & 77.4 \stdpm{1.4} & 64.1 \stdpm{0.5} & 82.0 \stdpm{0.2} & 62.5 \stdpm{0.4} \\
&\ours & 32.3 \stdpm{0.2} & 43.4 \stdpm{0.3} & 76.6 \stdpm{0.6} & 63.5 \stdpm{0.5} & 81.6 \stdpm{0.4} & 62.1 \stdpm{0.6} \\
&\ours (SWA) & 32.5 \stdpm{0.2} & 43.6 \stdpm{0.2} & 76.3 \stdpm{0.3} & 63.4 \stdpm{0.3} & 81.8 \stdpm{0.6} & 62.0 \stdpm{0.5} \\
\midrule
\multirow{8}{*}{\shortstack{ViT-B/16\\(150M)}} & VSE$\infty$ & 34.4 \stdpm{0.1} & 44.8 \stdpm{0.2} & 81.2 \stdpm{0.7} & 69.4 \stdpm{0.2} & 84.9 \stdpm{0.4} & 68.0 \stdpm{0.1} \\
&P2RM & 30.6 \stdpm{0.2} & 42.2 \stdpm{0.0} & 72.9 \stdpm{2.2} & 58.5 \stdpm{0.4} & 78.3 \stdpm{0.4} & 56.8 \stdpm{0.3} \\
&DAA & 12.4 \stdpm{0.1} & 22.4 \stdpm{0.2} & 40.3 \stdpm{0.2} & 26.4 \stdpm{0.4} & 46.1 \stdpm{0.7} & 24.3 \stdpm{0.3} \\
&InfoNCE & 33.7 \stdpm{0.1} & 44.4 \stdpm{0.1} & 79.7 \stdpm{0.4} & 68.2 \stdpm{0.6} & 84.3 \stdpm{0.7} & 66.8 \stdpm{0.5} \\
&PCME & 33.2 \stdpm{0.3} & 44.0 \stdpm{0.4} & 79.1 \stdpm{0.4} & 66.8 \stdpm{0.6} & 83.6 \stdpm{0.3} & 65.3 \stdpm{0.6} \\
&\ours ($\mu$ only) & 34.0 \stdpm{0.1} & 44.5 \stdpm{0.3} & 80.9 \stdpm{0.6} & 69.6 \stdpm{0.8} & 85.3 \stdpm{0.2} & 68.4 \stdpm{0.7} \\
&\ours & 34.5 \stdpm{0.1} & 45.1 \stdpm{0.1} & 81.6 \stdpm{0.2} & 69.9 \stdpm{0.2} & 85.3 \stdpm{0.1} & 68.7 \stdpm{0.4} \\
&\ours (SWA) & 34.6 \stdpm{0.1} & 45.2 \stdpm{0.1} & 81.8 \stdpm{0.8} & 70.3 \stdpm{0.1} & 85.6 \stdpm{0.1} & 69.0 \stdpm{0.1} \\ \midrule
\multirow{4}{*}{\shortstack{ViT-L/14\\(428M)}} & VSE$\infty$ & 15.7 & 27.2 & 39.7 & 28.9 & 51.2 & 27.4 \\
&InfoNCE L/14 & 27.8 & 39.6 & 69.0 & 53.9 & 75.9 & 51.9 \\
&PCME & 34.1 & 44.5 & 81.5 & 70.7 & 86.5 & 69.5 \\
&\ours & 35.4 & 45.3 & 84.0 & 73.3 & 87.9 & 71.8 \\
\bottomrule
\end{tabular}
\vspace{-.5em}
\end{table}

\begin{table}[t]
\caption{\small {\bf Text-to-image retrieval full results.} Full numbers also can be found in \url{https://github.com/naver-ai/pcmepp}.}
\label{tab:appendix_t2i_full}
\small
\centering
\setlength{\tabcolsep}{5pt}
\begin{tabular}{llcccccc}
\toprule
            &       & \multicolumn{3}{c}{ECCV Caption} & CxC & \multicolumn{2}{c}{COCO}\\
Backbone & Method & mAP@R & R-P & R@1                & R@1 & 1K R@1 & 5K R@1 \\ \midrule
\multirow{8}{*}{\shortstack{ViT-B/32\\(151M)}} & VSE$\infty$ & 47.7 \stdpm{0.2} & 55.9 \stdpm{0.3} & 88.6 \stdpm{0.9} & 49.0 \stdpm{2.6} & 67.9 \stdpm{2.2} & 46.9 \stdpm{2.6} \\
&P2RM & 47.6 \stdempty & 55.5 \stdempty & 89.5 \stdempty & 48.4 \stdempty & 67.5 \stdempty & 46.3 \stdempty \\
&DAA & 47.3 \stdpm{0.4} & 55.7 \stdpm{0.4} & 88.1 \stdpm{0.3} & 48.2 \stdpm{0.1} & 67.4 \stdpm{0.1} & 46.1 \stdpm{0.1} \\
&InfoNCE & 46.8 \stdpm{0.5} & 55.1 \stdpm{0.5} & 88.0 \stdpm{0.8} & 48.0 \stdpm{0.3} & 67.7 \stdpm{0.2} & 46.0 \stdpm{0.3} \\
&PCME & 47.1 \stdpm{0.2} & 55.5 \stdpm{0.2} & 88.0 \stdpm{0.5} & 48.0 \stdpm{0.1} & 67.6 \stdpm{0.1} & 46.1 \stdpm{0.1} \\
&\ours ($\mu$ only) & 46.8 \stdpm{0.3} & 55.0 \stdpm{0.3} & 88.0 \stdpm{0.8} & 50.4 \stdpm{0.3} & 69.3 \stdpm{0.1} & 48.4 \stdpm{0.3} \\
&\ours & 47.8 \stdpm{0.2} & 55.9 \stdpm{0.1} & 89.5 \stdpm{0.2} & 50.1 \stdpm{0.1} & 69.2 \stdpm{0.1} & 48.1 \stdpm{0.1} \\
&\ours (SWA) & 47.8 \stdpm{0.2} & 56.0 \stdpm{0.2} & 89.5 \stdpm{0.2} & 50.2 \stdpm{0.1} & 69.3 \stdpm{0.0} & 48.3 \stdpm{0.1} \\
\midrule
\multirow{8}{*}{\shortstack{ViT-B/16\\(150M)}} & VSE$\infty$ & 49.1 \stdpm{0.3} & 56.5 \stdpm{0.2} & 91.3 \stdpm{0.4} & 55.3 \stdpm{0.3} & 73.3 \stdpm{0.3} & 53.4 \stdpm{0.3} \\
&P2RM & 48.8 \stdpm{0.3} & 56.8 \stdpm{0.4} & 88.5 \stdpm{0.2} & 50.0 \stdpm{0.1} & 69.2 \stdpm{0.1} & 48.1 \stdpm{0.1} \\
&DAA&29.0 \stdpm{0.2} & 38.9 \stdpm{0.3} & 60.0 \stdpm{1.0} & 24.3 \stdpm{0.4} & 41.3 \stdpm{0.4} & 22.4 \stdpm{0.4} \\
&InfoNCE & 48.5 \stdpm{0.2} & 56.3 \stdpm{0.1} & 89.9 \stdpm{0.2} & 53.6 \stdpm{0.3} & 72.3 \stdpm{0.1} & 51.7 \stdpm{0.3} \\
&PCME & 48.7 \stdpm{0.2} & 56.5 \stdpm{0.2} & 89.5 \stdpm{0.1} & 53.1 \stdpm{0.9} & 72.0 \stdpm{0.6} & 51.2 \stdpm{0.9} \\
&\ours ($\mu$ only) & 48.5 \stdpm{0.1} & 56.3 \stdpm{0.1} & 90.4 \stdpm{0.7} & 55.4 \stdpm{0.3} & 73.4 \stdpm{0.2} & 53.6 \stdpm{0.3} \\
&\ours & 49.7 \stdpm{0.2} & 57.2 \stdpm{0.2} & 91.4 \stdpm{0.6} & 55.2 \stdpm{0.2} & 73.4 \stdpm{0.1} & 53.4 \stdpm{0.2} \\
&\ours (SWA) & 49.8 \stdpm{0.1} & 57.2 \stdpm{0.2} & 91.4 \stdpm{0.7} & 55.5 \stdpm{0.2} & 73.5 \stdpm{0.1} & 53.6 \stdpm{0.2} \\
\midrule
\multirow{4}{*}{\shortstack{ViT-L/14\\(428M)}} & VSE$\infty$ & 24.7 & 35.8 & 52.7 & 19.7 & 37.9 & 18.0 \\
&InfoNCE L/14 & 43.4 & 52.1 & 82.1 & 42.1 & 63.1 & 39.9 \\
&PCME & 48.2 & 56.0 & 90.5 & 56.1 & 74.1 & 54.3 \\
&\ours & 48.6 & 56.3 & 92.5 & 58.9 & 75.8 & 57.1 \\
\bottomrule
\end{tabular}
\vspace{-.5em}
\end{table}

\begin{figure}[t!]
\centering
\vspace{0pt}
\begin{minipage}[t]{0.49\linewidth}
\includegraphics[width=\textwidth]{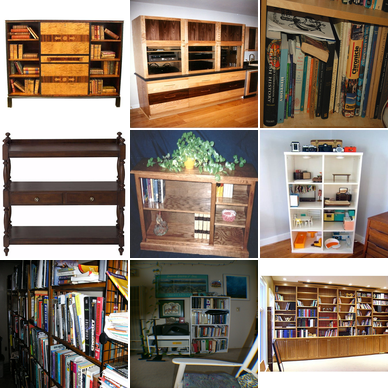}
\caption*{\small \textbf{Label: Bookcase}, selected prompts: {\scriptsize \texttt{a close-up photo of a \{\,\}, a close-up photo of the \{\,\}}}}
\includegraphics[width=\textwidth]{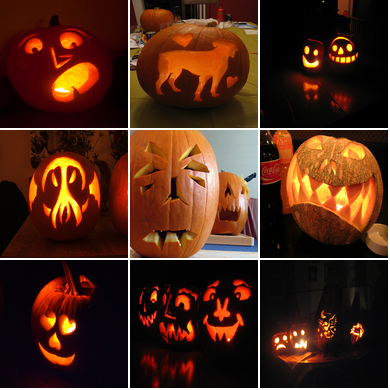}
\caption*{\small \textbf{Label: Jack-o'-lantern}, selected prompts: {\scriptsize
\texttt{a photo of the clean \{\,\}. art of the \{\,\}. a cropped photo of the \{\,\}. a close-up photo of the \{\,\}. a photo of a clean \{\,\}. a cropped photo of a \{\,\}}}}
\end{minipage}
\hfill
\begin{minipage}[t]{0.49\linewidth}
\includegraphics[width=\textwidth]{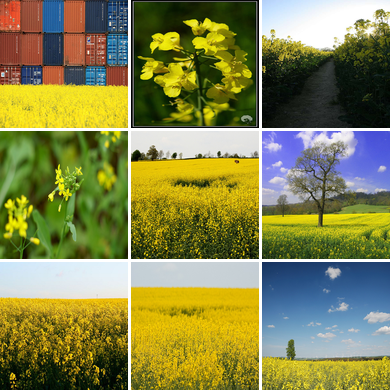}
\caption*{\small \textbf{Label: Rapeseed}, selected prompts: {\scriptsize
\texttt{
a sculpture of the \{\,\}. a \{\,\} in a video game. a sculpture of a \{\,\}. art of the \{\,\}. the \{\,\} in a video game. a tattoo of the \{\,\}. graffiti of the \{\,\}. the plushie \{\,\}. a tattoo of a \{\,\}. a drawing of a \{\,\}. a drawing of the \{\,\}. a sketch of the \{\,\}. a close-up photo of the \{\,\}. art of a \{\,\}. a photo of a clean \{\,\}. a plushie \{\,\}. a close-up photo of a \{\,\}. a photo of the clean \{\,\}. a rendering of the \{\,\}. a photo of the large \{\,\}. a rendering of a \{\,\}. a sketch of a \{\,\}. a photo of the \{\,\}. a cropped photo of the \{\,\}. a rendition of a \{\,\}. graffiti of a \{\,\}. a rendition of the \{\,\}. a photo of the small \{\,\}. a photo of one \{\,\}. a photo of the dirty \{\,\}. a photo of a dirty \{\,\}. a photo of a \{\,\}. a photo of many \{\,\}. a cropped photo of a \{\,\}. a photo of a large \{\,\}. a black and white photo of a \{\,\}. a painting of the \{\,\}. a photo of the nice \{\,\}. a photo of a small \{\,\}. a photo of the weird \{\,\}. a painting of a \{\,\}. a black and white photo of the \{\,\}. a low resolution photo of a \{\,\}. a dark photo of the \{\,\}. a dark photo of a \{\,\}. a doodle of the \{\,\}. a photo of the cool \{\,\}. a doodle of a \{\,\}. a low resolution photo of the \{\,\}. a photo of the hard to see \{\,\}. a blurry photo of the \{\,\}. a photo of a weird \{\,\}. a blurry photo of a \{\,\}
}}}
\end{minipage}
\caption{\small Example ImageNet images and the prompts by the uncertainty-based automatic prompt-filtering.}
\label{fig:appendix_promt_image_examples}
\end{figure}
\clearpage

\section{Limitations and Discussions}
\label{sec:limitations}

\paragraph{Normal distribution with diagonal covariance would be insufficient?}
One can argue that the uncertainty modeling power of \ours can be improved by relaxing the diagonal covariance condition. However, \citet{oh2019hibprobemb} showed that if the dimensionality of the embedding space and the number of ``hidden features'' are the same (\eg, if an image is the combination of two digits, then the number of potential latent features for each input is two), then the diagonal covariance condition can sufficiently capture the inherent uncertainty of the dataset. In practice, we use a very high dimensional embedding space (\eg, 1024) that can sufficiently capture complex relationships between features. Also, in practice, if we relax the condition, the dimensionality of the $\log \sigma^2$ head output should be about 1M (= 1024 $\times$ 1024), which will require expensive computational budgets and large memory.

\paragraph{Additional sampling is still required if we use other density functions.}
The proposed probabilistic distance is defined in distribution-free: $\mathbb E_{\Zv, \Zt} \| \Zv - \Zt \|_2^2$. However, the closed-form solution (CSD) is specifically designed for normally distributed embeddings. If one needs probabilistic embeddings with different distributions, such as von Mises–Fisher distribution \citep{kirchhof2023probabilistic} or Laplacian distribution \citep{warburg2023bayesian_lam}, CSD is no longer applicable. Instead, we can adapt any distribution to \ours by using a Monte Carlo approximation, \ie, by computing $\frac{1}{n \times m} \sum_{z_v^i = z_v^1}^{z_v^n} \sum_{z_t^j = z_t^0}^{z_v^m} \| z_v^i - z_t^j \|_2^2$, where $z_v^i \sim \Zv$ and $z_t^j \sim \Zt$. This change will share the expensive computation issue of previous approaches \citep{oh2019hibprobemb, chun2021pcme}, but the additionally introduced techniques in \ours for mitigating the loss saturation issue (\ie, pseudo-positives and MSDA) will still be effective. Applying other probabilistic densities to \ours and discovering the effect of different distribution choices will be interesting future work.

\vspace{-.5em}
\paragraph{How does uncertainty help learning image-text representations?}
As shown in the main experiments, the probabilistic approach is helpful for improving the retrieval performances, but the gaps are not significant (\eg, \cref{tab:main} shows that in ViT-B/32, the gap between VSE$\infty$ and \ours with SWA is not significant). However, as shown in larger backbone experiments (ViT-B/16 and ViT-L/14) and noisy correspondence experiments (\cref{tab:noise_ratio}), \ours shows more generalizable performances compared to the existing state-of-the-art ITM methods with the same backbone. Furthermore, as shown in \cref{subsec:uncertainty_analysis} and \cref{subsec:more_applications}, the learned uncertainty by \ours shows high interpretability of the datasets as well as the controllability by the users when the rejection of the retrieved items is required. Thus, I believe that the uncertainty-aware learning paradigm and the learned uncertainty will be helpful for image-text matching problems and downstream tasks, such as zero-shot classification.

{
\bibliographystyle{iclr2024_conference}
\bibliography{reference}

\begin{thebibliography}{65}
\providecommand{\natexlab}[1]{#1}
\providecommand{\url}[1]{\texttt{#1}}
\expandafter\ifx\csname urlstyle\endcsname\relax
  \providecommand{\doi}[1]{doi: #1}\else
  \providecommand{\doi}{doi: \begingroup \urlstyle{rm}\Url}\fi

\bibitem[Alemi et~al.(2017)Alemi, Fischer, Dillon, and Murphy]{alemi2017vib}
Alexander~A. Alemi, Ian Fischer, Joshua~V. Dillon, and Kevin Murphy.
\newblock Deep variational information bottleneck.
\newblock In \emph{Int. Conf. Learn. Represent.}, 2017.
\newblock URL \url{https://openreview.net/forum?id=HyxQzBceg}.

\bibitem[Biten et~al.(2022)Biten, Mafla, G{\'o}mez, and
  Karatzas]{biten2022image}
Ali~Furkan Biten, Andres Mafla, Llu{\'\i}s G{\'o}mez, and Dimosthenis Karatzas.
\newblock Is an image worth five sentences? a new look into semantics for
  image-text matching.
\newblock In \emph{IEEE/CVF Winter Conf. App. Comput. Vis.}, pp.\  1391--1400,
  2022.

\bibitem[Cha et~al.(2021)Cha, Chun, Lee, Cho, Park, Lee, and Park]{cha2021swad}
Junbum Cha, Sanghyuk Chun, Kyungjae Lee, Han-Cheol Cho, Seunghyun Park, Yunsung
  Lee, and Sungrae Park.
\newblock Swad: Domain generalization by seeking flat minima.
\newblock In \emph{Adv. Neural Inform. Process. Syst.}, 2021.

\bibitem[Chang et~al.(2020)Chang, Lan, Cheng, and
  Wei]{chang2020datauncertainty}
Jie Chang, Zhonghao Lan, Changmao Cheng, and Yichen Wei.
\newblock Data uncertainty learning in face recognition.
\newblock In \emph{IEEE Conf. Comput. Vis. Pattern Recog.}, pp.\  5710--5719,
  2020.

\bibitem[Changpinyo et~al.(2021)Changpinyo, Sharma, Ding, and
  Soricut]{changpinyo2021conceptual}
Soravit Changpinyo, Piyush Sharma, Nan Ding, and Radu Soricut.
\newblock Conceptual 12m: Pushing web-scale image-text pre-training to
  recognize long-tail visual concepts.
\newblock In \emph{IEEE Conf. Comput. Vis. Pattern Recog.}, pp.\  3558--3568,
  2021.

\bibitem[Chen et~al.(2021)Chen, Hu, Wu, Jiang, and Wang]{chen2021vseinfty}
Jiacheng Chen, Hexiang Hu, Hao Wu, Yuning Jiang, and Changhu Wang.
\newblock Learning the best pooling strategy for visual semantic embedding.
\newblock In \emph{IEEE Conf. Comput. Vis. Pattern Recog.}, pp.\  15789--15798,
  2021.

\bibitem[Chen et~al.(2015)Chen, Fang, Lin, Vedantam, Gupta, Doll{\'a}r, and
  Zitnick]{chen2015cococaption}
Xinlei Chen, Hao Fang, Tsung-Yi Lin, Ramakrishna Vedantam, Saurabh Gupta, Piotr
  Doll{\'a}r, and C~Lawrence Zitnick.
\newblock Microsoft coco captions: Data collection and evaluation server.
\newblock \emph{arXiv preprint arXiv:1504.00325}, 2015.

\bibitem[Chun et~al.(2021)Chun, Oh, De~Rezende, Kalantidis, and
  Larlus]{chun2021pcme}
Sanghyuk Chun, Seong~Joon Oh, Rafael~Sampaio De~Rezende, Yannis Kalantidis, and
  Diane Larlus.
\newblock Probabilistic embeddings for cross-modal retrieval.
\newblock In \emph{IEEE Conf. Comput. Vis. Pattern Recog.}, 2021.

\bibitem[Chun et~al.(2022)Chun, Kim, Park, Chang, and Oh]{chun2022eccv_caption}
Sanghyuk Chun, Wonjae Kim, Song Park, Minsuk~Chang Chang, and Seong~Joon Oh.
\newblock Eccv caption: Correcting false negatives by collecting
  machine-and-human-verified image-caption associations for ms-coco.
\newblock In \emph{Eur. Conf. Comput. Vis.}, 2022.

\bibitem[Desai et~al.(2021)Desai, Kaul, Aysola, and Johnson]{desai2021redcaps}
Karan Desai, Gaurav Kaul, Zubin Aysola, and Justin Johnson.
\newblock {RedCaps: Web-curated image-text data created by the people, for the
  people}.
\newblock In \emph{NeurIPS Datasets and Benchmarks}, 2021.

\bibitem[Diao et~al.(2021)Diao, Zhang, Ma, and Lu]{Diao2021SGRAF}
Haiwen Diao, Ying Zhang, Lin Ma, and Huchuan Lu.
\newblock Similarity reasoning and filtration for image-text matching.
\newblock In \emph{Proceedings of the AAAI Conference on Artificial
  Intelligence}, 2021.

\bibitem[Dosovitskiy et~al.(2021)Dosovitskiy, Beyer, Kolesnikov, Weissenborn,
  Zhai, Unterthiner, Dehghani, Minderer, Heigold, Gelly, Uszkoreit, and
  Houlsby]{vit}
Alexey Dosovitskiy, Lucas Beyer, Alexander Kolesnikov, Dirk Weissenborn,
  Xiaohua Zhai, Thomas Unterthiner, Mostafa Dehghani, Matthias Minderer, Georg
  Heigold, Sylvain Gelly, Jakob Uszkoreit, and Neil Houlsby.
\newblock An image is worth 16x16 words: Transformers for image recognition at
  scale.
\newblock In \emph{Int. Conf. Learn. Represent.}, 2021.
\newblock URL \url{https://openreview.net/forum?id=YicbFdNTTy}.

\bibitem[Faghri et~al.(2018)Faghri, Fleet, Kiros, and Fidler]{faghri2018vsepp}
Fartash Faghri, David~J Fleet, Jamie~Ryan Kiros, and Sanja Fidler.
\newblock {VSE++}: Improving visual-semantic embeddings with hard negatives.
\newblock In \emph{Brit. Mach. Vis. Conf.}, 2018.

\bibitem[Frome et~al.(2013)Frome, Corrado, Shlens, Bengio, Dean, Ranzato, and
  Mikolov]{frome2013devise}
Andrea Frome, Greg~S Corrado, Jon Shlens, Samy Bengio, Jeff Dean, Marc'Aurelio
  Ranzato, and Tomas Mikolov.
\newblock Devise: A deep visual-semantic embedding model.
\newblock In \emph{Adv. Neural Inform. Process. Syst.}, pp.\  2121--2129, 2013.

\bibitem[Gu et~al.(2018)Gu, Cai, Joty, Niu, and Wang]{gu2018look}
Jiuxiang Gu, Jianfei Cai, Shafiq~R Joty, Li~Niu, and Gang Wang.
\newblock Look, imagine and match: Improving textual-visual cross-modal
  retrieval with generative models.
\newblock In \emph{IEEE Conf. Comput. Vis. Pattern Recog.}, pp.\  7181--7189,
  2018.

\bibitem[Heo et~al.(2021)Heo, Chun, Oh, Han, Yun, Kim, Uh, and
  Ha]{heo2021adamp}
Byeongho Heo, Sanghyuk Chun, Seong~Joon Oh, Dongyoon Han, Sangdoo Yun, Gyuwan
  Kim, Youngjung Uh, and Jung-Woo Ha.
\newblock Adamp: Slowing down the slowdown for momentum optimizers on
  scale-invariant weights.
\newblock In \emph{Int. Conf. Learn. Represent.}, 2021.

\bibitem[Huang et~al.(2018)Huang, Wu, Song, and Wang]{huang2018learning}
Yan Huang, Qi~Wu, Chunfeng Song, and Liang Wang.
\newblock Learning semantic concepts and order for image and sentence matching.
\newblock In \emph{IEEE Conf. Comput. Vis. Pattern Recog.}, pp.\  6163--6171,
  2018.

\bibitem[Huang et~al.(2021)Huang, Niu, Liu, Ding, Xiao, hua wu, and
  Peng]{huang2021ncr}
Zhenyu Huang, Guocheng Niu, Xiao Liu, Wenbiao Ding, Xinyan Xiao, hua wu, and
  Xi~Peng.
\newblock Learning with noisy correspondence for cross-modal matching.
\newblock In A.~Beygelzimer, Y.~Dauphin, P.~Liang, and J.~Wortman Vaughan
  (eds.), \emph{Adv. Neural Inform. Process. Syst.}, 2021.
\newblock URL \url{https://openreview.net/forum?id=S9ZyhWC17wJ}.

\bibitem[Ilharco et~al.(2021)Ilharco, Wortsman, Wightman, Gordon, Carlini,
  Taori, Dave, Shankar, Namkoong, Miller, Hajishirzi, Farhadi, and
  Schmidt]{openclip}
Gabriel Ilharco, Mitchell Wortsman, Ross Wightman, Cade Gordon, Nicholas
  Carlini, Rohan Taori, Achal Dave, Vaishaal Shankar, Hongseok Namkoong, John
  Miller, Hannaneh Hajishirzi, Ali Farhadi, and Ludwig Schmidt.
\newblock Openclip, July 2021.
\newblock URL \url{https://doi.org/10.5281/zenodo.5143773}.
\newblock If you use this software, please cite it as below.

\bibitem[Izmailov et~al.(2018)Izmailov, Podoprikhin, Garipov, Vetrov, and
  Wilson]{izmailov2018swa}
Pavel Izmailov, Dmitrii Podoprikhin, Timur Garipov, Dmitry Vetrov, and
  Andrew~Gordon Wilson.
\newblock Averaging weights leads to wider optima and better generalization.
\newblock \emph{Conference on Uncertainty in Artificial Intelligence}, 2018.

\bibitem[Ji et~al.(2023)Ji, Wang, Gong, Zhang, Zhu, Wang, Zhang, Sakai, and
  Yang]{ji2023map}
Yatai Ji, Junjie Wang, Yuan Gong, Lin Zhang, Yanru Zhu, Hongfa Wang, Jiaxing
  Zhang, Tetsuya Sakai, and Yujiu Yang.
\newblock Map: Multimodal uncertainty-aware vision-language pre-training model.
\newblock In \emph{Proceedings of the IEEE/CVF Conference on Computer Vision
  and Pattern Recognition}, pp.\  23262--23271, 2023.

\bibitem[Johnson et~al.(2019)Johnson, Douze, and J{\'e}gou]{johnson2019faiss}
Jeff Johnson, Matthijs Douze, and Herv{\'e} J{\'e}gou.
\newblock Billion-scale similarity search with {GPUs}.
\newblock \emph{IEEE Transactions on Big Data}, 7\penalty0 (3):\penalty0
  535--547, 2019.

\bibitem[Karpathy \& Fei-Fei(2015)Karpathy and Fei-Fei]{karpathy2015deep}
Andrej Karpathy and Li~Fei-Fei.
\newblock Deep visual-semantic alignments for generating image descriptions.
\newblock In \emph{IEEE Conf. Comput. Vis. Pattern Recog.}, pp.\  3128--3137,
  2015.

\bibitem[Kim et~al.(2023)Kim, Kim, and Kwak]{kim2023improving}
Dongwon Kim, Namyup Kim, and Suha Kwak.
\newblock Improving cross-modal retrieval with set of diverse embeddings.
\newblock In \emph{Proceedings of the IEEE/CVF Conference on Computer Vision
  and Pattern Recognition}, pp.\  23422--23431, 2023.

\bibitem[Kim et~al.(2018)Kim, Kim, Seo, Kim, Park, Park, Jo, Kim, Yang, Kim,
  et~al.]{kim2018nsml}
Hanjoo Kim, Minkyu Kim, Dongjoo Seo, Jinwoong Kim, Heungseok Park, Soeun Park,
  Hyunwoo Jo, KyungHyun Kim, Youngil Yang, Youngkwan Kim, et~al.
\newblock Nsml: Meet the mlaas platform with a real-world case study.
\newblock \emph{arXiv preprint arXiv:1810.09957}, 2018.

\bibitem[Kim et~al.(2021)Kim, Son, and Kim]{kim2021vilt}
Wonjae Kim, Bokyung Son, and Ildoo Kim.
\newblock Vilt: Vision-and-language transformer without convolution or region
  supervision.
\newblock In \emph{Int. Conf. Mach. Learn.}, 2021.

\bibitem[Kingma \& Ba(2015)Kingma and Ba]{kingma2014adam}
Diederik~P Kingma and Jimmy Ba.
\newblock Adam: A method for stochastic optimization.
\newblock In \emph{Int. Conf. Learn. Represent.}, 2015.

\bibitem[Kirchhof et~al.(2023)Kirchhof, Kasneci, and
  Oh]{kirchhof2023probabilistic}
Michael Kirchhof, Enkelejda Kasneci, and Seong~Joon Oh.
\newblock Probabilistic contrastive learning recovers the correct aleatoric
  uncertainty of ambiguous inputs.
\newblock In \emph{International Conference on Machine Learning}, 2023.

\bibitem[Kiros et~al.(2014)Kiros, Salakhutdinov, and Zemel]{kiros2014unifying}
Ryan Kiros, Ruslan Salakhutdinov, and Richard~S Zemel.
\newblock Unifying visual-semantic embeddings with multimodal neural language
  models.
\newblock \emph{arXiv preprint arXiv:1411.2539}, 2014.

\bibitem[Lee et~al.(2018)Lee, Chen, Hua, Hu, and He]{lee2018scan}
Kuang-Huei Lee, Xi~Chen, Gang Hua, Houdong Hu, and Xiaodong He.
\newblock Stacked cross attention for image-text matching.
\newblock In \emph{Eur. Conf. Comput. Vis.}, 2018.

\bibitem[Li et~al.(2022{\natexlab{a}})Li, Song, Gao, Zeng, Zhang, and
  Li]{li2022daa}
Hao Li, Jingkuan Song, Lianli Gao, Pengpeng Zeng, Haonan Zhang, and Gongfu Li.
\newblock A differentiable semantic metric approximation in probabilistic
  embedding for cross-modal retrieval.
\newblock \emph{Advances in Neural Information Processing Systems},
  35:\penalty0 11934--11946, 2022{\natexlab{a}}.

\bibitem[Li et~al.(2020)Li, Socher, and Hoi]{dividemix}
Junnan Li, Richard Socher, and Steven~C.H. Hoi.
\newblock Dividemix: Learning with noisy labels as semi-supervised learning.
\newblock In \emph{International Conference on Learning Representations}, 2020.
\newblock URL \url{https://openreview.net/forum?id=HJgExaVtwr}.

\bibitem[Li et~al.(2022{\natexlab{b}})Li, Li, Xiong, and Hoi]{li2022blip}
Junnan Li, Dongxu Li, Caiming Xiong, and Steven Hoi.
\newblock Blip: Bootstrapping language-image pre-training for unified
  vision-language understanding and generation, 2022{\natexlab{b}}.

\bibitem[Li et~al.(2019)Li, Zhang, Li, Li, and Fu]{Li2019VSRN}
Kunpeng Li, Yulun Zhang, Kai Li, Yuanyuan Li, and Yun Fu.
\newblock Visual semantic reasoning for image-text matching.
\newblock In \emph{Int. Conf. Comput. Vis.}, pp.\  4654--4662, 2019.

\bibitem[Lin et~al.(2014)Lin, Maire, Belongie, Hays, Perona, Ramanan,
  Doll{\'a}r, and Zitnick]{lin2014coco}
Tsung-Yi Lin, Michael Maire, Serge Belongie, James Hays, Pietro Perona, Deva
  Ramanan, Piotr Doll{\'a}r, and C~Lawrence Zitnick.
\newblock Microsoft coco: Common objects in context.
\newblock In \emph{Eur. Conf. Comput. Vis.}, 2014.

\bibitem[Mahajan et~al.(2018)Mahajan, Girshick, Ramanathan, He, Paluri, Li,
  Bharambe, and Van Der~Maaten]{mahajan2018exploring}
Dhruv Mahajan, Ross Girshick, Vignesh Ramanathan, Kaiming He, Manohar Paluri,
  Yixuan Li, Ashwin Bharambe, and Laurens Van Der~Maaten.
\newblock Exploring the limits of weakly supervised pretraining.
\newblock In \emph{Proceedings of the European conference on computer vision
  (ECCV)}, pp.\  181--196, 2018.

\bibitem[Musgrave et~al.(2020)Musgrave, Belongie, and Lim]{musgrave2020metric}
Kevin Musgrave, Serge Belongie, and Ser-Nam Lim.
\newblock A metric learning reality check.
\newblock In \emph{Eur. Conf. Comput. Vis.}, 2020.

\bibitem[Neculai et~al.(2022)Neculai, Chen, and
  Akata]{neculai2022probabilistic}
Andrei Neculai, Yanbei Chen, and Zeynep Akata.
\newblock Probabilistic compositional embeddings for multimodal image
  retrieval.
\newblock In \emph{IEEE Conf. Comput. Vis. Pattern Recog.}, pp.\  4547--4557,
  2022.

\bibitem[Oh et~al.(2019)Oh, Murphy, Pan, Roth, Schroff, and
  Gallagher]{oh2019hibprobemb}
Seong~Joon Oh, Kevin Murphy, Jiyan Pan, Joseph Roth, Florian Schroff, and
  Andrew Gallagher.
\newblock Modeling uncertainty with hedged instance embedding.
\newblock In \emph{Int. Conf. Learn. Represent.}, 2019.

\bibitem[Parekh et~al.(2021)Parekh, Baldridge, Cer, Waters, and
  Yang]{parekh2020cxc}
Zarana Parekh, Jason Baldridge, Daniel Cer, Austin Waters, and Yinfei Yang.
\newblock Crisscrossed captions: Extended intramodal and intermodal semantic
  similarity judgments for ms-coco.
\newblock In \emph{Conference of the European Chapter of the Association for
  Computational Linguistics}, 2021.

\bibitem[Park et~al.(2022{\natexlab{a}})Park, Yun, and Chun]{park2022msda}
Chanwoo Park, Sangdoo Yun, and Sanghyuk Chun.
\newblock A unified analysis of mixed sample data augmentation: A loss function
  perspective.
\newblock In \emph{Neural Information Processing Systems (NeurIPS)},
  2022{\natexlab{a}}.

\bibitem[Park et~al.(2022{\natexlab{b}})Park, Lee, Kim, and
  Sohn]{park2022probabilistic}
Jungin Park, Jiyoung Lee, Ig-Jae Kim, and Kwanghoon Sohn.
\newblock Probabilistic representations for video contrastive learning.
\newblock In \emph{IEEE Conf. Comput. Vis. Pattern Recog.}, pp.\  14711--14721,
  2022{\natexlab{b}}.

\bibitem[Plummer et~al.(2015)Plummer, Wang, Cervantes, Caicedo, Hockenmaier,
  and Lazebnik]{plummer2015flickr30k}
Bryan~A Plummer, Liwei Wang, Chris~M Cervantes, Juan~C Caicedo, Julia
  Hockenmaier, and Svetlana Lazebnik.
\newblock Flickr30k entities: Collecting region-to-phrase correspondences for
  richer image-to-sentence models.
\newblock In \emph{Proceedings of the IEEE international conference on computer
  vision}, pp.\  2641--2649, 2015.

\bibitem[Qin et~al.(2022)Qin, Peng, Peng, Wang, and Hu]{Qin2022DECL}
Yang Qin, Dezhong Peng, Xi~Peng, Xu~Wang, and Peng Hu.
\newblock Deep evidential learning with noisy correspondence for cross-modal
  retrieval.
\newblock In \emph{Proceedings of the 30th ACM International Conference on
  Multimedia}, MM '22, pp.\  4948–4956, 2022.
\newblock \doi{10.1145/3503161.3547922}.

\bibitem[Radford et~al.(2021)Radford, Kim, Hallacy, Ramesh, Goh, Agarwal,
  Sastry, Askell, Mishkin, Clark, et~al.]{radford2021clip}
Alec Radford, Jong~Wook Kim, Chris Hallacy, Aditya Ramesh, Gabriel Goh,
  Sandhini Agarwal, Girish Sastry, Amanda Askell, Pamela Mishkin, Jack Clark,
  et~al.
\newblock Learning transferable visual models from natural language
  supervision.
\newblock In \emph{Int. Conf. Mach. Learn.}, pp.\  8748--8763. PMLR, 2021.

\bibitem[Russakovsky et~al.(2015)Russakovsky, Deng, Su, Krause, Satheesh, Ma,
  Huang, Karpathy, Khosla, Bernstein, et~al.]{imagenet}
Olga Russakovsky, Jia Deng, Hao Su, Jonathan Krause, Sanjeev Satheesh, Sean Ma,
  Zhiheng Huang, Andrej Karpathy, Aditya Khosla, Michael Bernstein, et~al.
\newblock Imagenet large scale visual recognition challenge.
\newblock \emph{International journal of computer vision}, 115\penalty0
  (3):\penalty0 211--252, 2015.

\bibitem[Sharma et~al.(2018)Sharma, Ding, Goodman, and
  Soricut]{sharma2018conceptual}
Piyush Sharma, Nan Ding, Sebastian Goodman, and Radu Soricut.
\newblock Conceptual captions: A cleaned, hypernymed, image alt-text dataset
  for automatic image captioning.
\newblock In \emph{Association for Computational Linguistics}, pp.\
  2556--2565, 2018.

\bibitem[Shi \& Jain(2019)Shi and Jain]{shi2019probabilistic}
Yichun Shi and Anil~K Jain.
\newblock Probabilistic face embeddings.
\newblock In \emph{IEEE Conf. Comput. Vis. Pattern Recog.}, pp.\  6902--6911,
  2019.

\bibitem[Silnova et~al.(2020)Silnova, Brummer, Rohdin, Stafylakis, and
  Burget]{silnova2020probabilistic}
Anna Silnova, Niko Brummer, Johan Rohdin, Themos Stafylakis, and Lukas Burget.
\newblock Probabilistic embeddings for speaker diarization.
\newblock In \emph{Proc. Odyssey 2020 The Speaker and Language Recognition
  Workshop}, pp.\  24--31, 2020.

\bibitem[Song \& Soleymani(2019)Song and Soleymani]{song2019pvse}
Yale Song and Mohammad Soleymani.
\newblock Polysemous visual-semantic embedding for cross-modal retrieval.
\newblock In \emph{IEEE Conf. Comput. Vis. Pattern Recog.}, pp.\  1979--1988,
  2019.

\bibitem[Sun et~al.(2020)Sun, Zhao, Chen, Schroff, Adam, and
  Liu]{sun2020viewinvprobemb}
Jennifer~J Sun, Jiaping Zhao, Liang-Chieh Chen, Florian Schroff, Hartwig Adam,
  and Ting Liu.
\newblock View-invariant probabilistic embedding for human pose.
\newblock In \emph{Eur. Conf. Comput. Vis.}, 2020.

\bibitem[Van~der Maaten \& Hinton(2008)Van~der Maaten and Hinton]{tsne}
Laurens Van~der Maaten and Geoffrey Hinton.
\newblock Visualizing data using t-sne.
\newblock \emph{Journal of machine learning research}, 9\penalty0 (11), 2008.

\bibitem[Vaswani et~al.(2017)Vaswani, Shazeer, Parmar, Uszkoreit, Jones, Gomez,
  Kaiser, and Polosukhin]{transformer}
Ashish Vaswani, Noam Shazeer, Niki Parmar, Jakob Uszkoreit, Llion Jones,
  Aidan~N Gomez, {\L}ukasz Kaiser, and Illia Polosukhin.
\newblock Attention is all you need.
\newblock In \emph{Adv. Neural Inform. Process. Syst.}, pp.\  5998--6008, 2017.

\bibitem[Wang et~al.(2020)Wang, Zhang, Ji, Pang, and Ma]{Wang2020CVSE}
Haoran Wang, Ying Zhang, Zhong Ji, Yanwei Pang, and Lin Ma.
\newblock Consensus-aware visual-semantic embedding for image-text matching.
\newblock In \emph{Eur. Conf. Comput. Vis.}, 2020.

\bibitem[Wang et~al.(2022)Wang, Gao, Xu, Luo, Yang, and Shen]{wang2022p2rm}
Zheng Wang, Zhenwei Gao, Xing Xu, Yadan Luo, Yang Yang, and Heng~Tao Shen.
\newblock Point to rectangle matching for image text retrieval.
\newblock In \emph{Proceedings of the 30th ACM International Conference on
  Multimedia}, pp.\  4977--4986, 2022.

\bibitem[Warburg et~al.(2023)Warburg, Miani, Brack, and
  Hauberg]{warburg2023bayesian_lam}
Frederik Warburg, Marco Miani, Silas Brack, and Soren Hauberg.
\newblock Bayesian metric learning for uncertainty quantification in image
  retrieval.
\newblock \emph{arXiv preprint arXiv:2302.01332}, 2023.

\bibitem[Wehrmann et~al.(2019)Wehrmann, Souza, Lopes, and
  Barros]{wehrmann2019language}
Jonatas Wehrmann, Douglas~M Souza, Mauricio~A Lopes, and Rodrigo~C Barros.
\newblock Language-agnostic visual-semantic embeddings.
\newblock In \emph{IEEE Conf. Comput. Vis. Pattern Recog.}, pp.\  5804--5813,
  2019.

\bibitem[Wightman et~al.(2021)Wightman, Touvron, and
  J{\'e}gou]{resnetstrikesback}
Ross Wightman, Hugo Touvron, and Herv{\'e} J{\'e}gou.
\newblock Resnet strikes back: An improved training procedure in timm.
\newblock \emph{arXiv preprint arXiv:2110.00476}, 2021.

\bibitem[Wu et~al.(2019)Wu, Mao, Zhang, Jiang, Li, Sun, and Ma]{wu2019unified}
Hao Wu, Jiayuan Mao, Yufeng Zhang, Yuning Jiang, Lei Li, Weiwei Sun, and
  Wei-Ying Ma.
\newblock Unified visual-semantic embeddings: Bridging vision and language with
  structured meaning representations.
\newblock In \emph{IEEE Conf. Comput. Vis. Pattern Recog.}, pp.\  6609--6618,
  2019.

\bibitem[Young et~al.(2014)Young, Lai, Hodosh, and Hockenmaier]{young2014image}
Peter Young, Alice Lai, Micah Hodosh, and Julia Hockenmaier.
\newblock From image descriptions to visual denotations: New similarity metrics
  for semantic inference over event descriptions.
\newblock \emph{Association for Computational Linguistics}, 2:\penalty0 67--78,
  2014.

\bibitem[Yun et~al.(2019)Yun, Han, Oh, Chun, Choe, and Yoo]{yun2019cutmix}
Sangdoo Yun, Dongyoon Han, Seong~Joon Oh, Sanghyuk Chun, Junsuk Choe, and
  Youngjoon Yoo.
\newblock Cutmix: Regularization strategy to train strong classifiers with
  localizable features.
\newblock In \emph{Int. Conf. Comput. Vis.}, 2019.

\bibitem[Zhang et~al.(2018)Zhang, Cisse, Dauphin, and
  Lopez-Paz]{zhang2017mixup}
Hongyi Zhang, Moustapha Cisse, Yann~N Dauphin, and David Lopez-Paz.
\newblock mixup: Beyond empirical risk minimization.
\newblock In \emph{Int. Conf. Learn. Represent.}, 2018.

\bibitem[Zhang et~al.(2021{\natexlab{a}})Zhang, Deng, Kawaguchi, Ghorbani, and
  Zou]{zhang2020does}
Linjun Zhang, Zhun Deng, Kenji Kawaguchi, Amirata Ghorbani, and James Zou.
\newblock How does mixup help with robustness and generalization?
\newblock In \emph{Int. Conf. Learn. Represent.}, 2021{\natexlab{a}}.

\bibitem[Zhang et~al.(2021{\natexlab{b}})Zhang, Deng, Kawaguchi, and
  Zou]{zhang2021mixupcalibration}
Linjun Zhang, Zhun Deng, Kenji Kawaguchi, and James Zou.
\newblock When and how mixup improves calibration.
\newblock \emph{arXiv preprint arXiv:2102.06289}, 2021{\natexlab{b}}.

\bibitem[Zhang et~al.(2021{\natexlab{c}})Zhang, Li, Hu, Yang, Zhang, Wang,
  Choi, and Gao]{zhang2021vinvl}
Pengchuan Zhang, Xiujun Li, Xiaowei Hu, Jianwei Yang, Lei Zhang, Lijuan Wang,
  Yejin Choi, and Jianfeng Gao.
\newblock Vinvl: Making visual representations matter in vision-language
  models.
\newblock In \emph{IEEE Conf. Comput. Vis. Pattern Recog.}, 2021{\natexlab{c}}.

\end{thebibliography}
}

\end{document}